\newtheorem{theorem}{Theorem}
\theoremstyle{remark}
\newtheorem*{remark}{Remark}
\title{Lower Ricci Curvature for Hypergraphs}
\author{
 Shiyi Yang$^1$, Can Chen$^{1,2,3,\dagger}$, Didong Li$^{1,4,\dagger}$ \\
  Department of Biostatistics$^1$, School of Data Science and Society$^2$, Department of Mathematics$^3$, \\Department of Statistics and Operations Research$^4$, University of North Carolina at Chapel Hill\\
  $\dagger$: To whom correspondence should be addressed.
}
\begin{document}

\maketitle
\begin{abstract}
Networks with higher-order interactions,  prevalent in biological, social, and information systems, are naturally represented as hypergraphs, yet their structural complexity poses fundamental challenges for geometric characterization. While curvature-based methods offer powerful insights in graph analysis, existing extensions to hypergraphs suffer from critical trade-offs: combinatorial approaches such as Forman-Ricci curvature capture only coarse features, whereas geometric methods like Ollivier-Ricci curvature offer richer expressivity but demand costly optimal transport computations. To address these challenges, we introduce hypergraph lower Ricci curvature (HLRC), a novel curvature metric defined in  closed form that achieves a principled balance between interpretability and efficiency. Evaluated across diverse synthetic and real-world hypergraph datasets, HLRC consistently reveals meaningful higher-order organization, distinguishing intra- from inter-community hyperedges, uncovering latent semantic labels, tracking temporal dynamics, and supporting robust clustering of hypergraphs based on global structure. By unifying geometric sensitivity with algorithmic simplicity, HLRC provides a versatile foundation for hypergraph analytics, with broad implications for tasks including node classification, anomaly detection, and generative modeling in complex systems.
\end{abstract}

\section{Introduction}

Numerous real-world systems exhibit higher-order interactions that transcend pairwise relationships. In social networks, individuals often engage in group activities\cite{iacopini2024temporal,zhu2018social,zlatic2009hypergraph}; in co-citation and co-authorship networks, connections naturally form among multiple papers or researchers\cite{wu2022hypergraph,lerner2025relational}; and in protein-protein interaction networks, functional behavior frequently involves complexes of more than two proteins\cite{xia2024integration,zhang2024hierarchical}. These higher-order relationships are more effectively modeled by hypergraphs, a generalization of  traditional graphs by allowing each hyperedge to connect an arbitrary number of nodes\cite{berge1984hypergraphs,bretto2013hypergraph}. Hypergraphs provide an unambiguous representation, enabling more faithful modeling of systems in which collective behavior or shared context is fundamental, and where information flow, influence, or function emerges from simultaneous group participation\cite{chen2020tensor,chen2021controllability}. Understanding the geometry of hypergraphs, including notions like curvature, centrality, and higher-order connectivity, therefore becomes  critical for uncovering the latent organizational principles that govern such systems\cite{bryant2014diversities}. These geometric insights can reveal modular structures, bottlenecks, hierarchies, and redundancies that go beyond pairwise interactions, all of which inform the design of more efficient and interpretable algorithms for learning, inference, and prediction in high-dimensional, structured data environments\cite{zhu2020graph,papadopoulos2015network,monti2017geometric}.

The notion of curvature, originally formulated in the context of smooth manifolds in differential geometry\cite{federer1959curvature}, has been applied to discrete structures to quantify their “shape” and connectivity at multiple scales.  In particular, graph curvature measures have emerged as powerful tools for characterizing how the local geometry of a graph deviates from flatness, thereby capturing nuanced structural patterns in node connectivity and edge relationships\cite{ye2019curvature}. Among these, Ollivier-Ricci curvature\cite{ollivier2009ricci, ollivier2010survey, lott2009ricci, lin2011ricci} draws from optimal transport theory in Riemannian geometry to assess how metric distances between probability distributions over adjacent nodes reflect the network’s global organization. On the other hand, Forman-Ricci curvature\cite{fesser2024mitigating, sreejith2016forman} provides a combinatorial approach that evaluates the imbalance in local edge weights and degrees, yielding interpretable insights into node centrality, network robustness, and signal propagation.  These notions of discrete curvature have demonstrated utility across a broad spectrum of applications, including community detection\cite{sia2019ollivier, park2024lower,tian2025curvature,tian2023mixed}, robustness evaluation in cancer networks\cite{sandhu2015graph,pouryahya2018characterizing,elkin2021geometric,farooq2019network}, vulnerability analysis in infrastructure systems\cite{gao2019measuring,chen2024vulnerability,tan2024analyzing,jonckheere2019ollivier}, bottleneck identification in graph neural networks\cite{topping2021understanding,li2022curvature,sun2022self,liu2023curvdrop} and etc\cite{ni2015ricci, samal2018comparative,simhal2025orco,sia2022inferring}. Despite these advances, extending curvature concepts to hypergraphs poses significant theoretical and computational challenges, as the combinatorial and topological complexity of higher-order interactions breaks many of the assumptions underlying classical graph-theoretic definitions.

Recent efforts have sought to generalize curvature notions to the hypergraph setting, yielding two primary formulations (see Supplementary Note 1). Hypergraph Ollivier-Ricci curvature (HORC\cite{eidi2020ollivier, coupette2023ollivier}) extends its graph counterpart by formulating curvature as a multi-marginal optimal transport problem over the sets of nodes participating in each hyperedge. While theoretically elegant, this approach is computationally intensive: the dimensionality of the transport problem grows rapidly with hyperedge cardinality, resulting in prohibitive runtime and memory requirements that limit its scalability to real-world systems. In addition,  hypergraph Forman-Ricci curvature (HFRC\cite{leal2021forman}) offers a closed-form, combinatorial measure that can be efficiently computed for each hyperedge. However, this simplicity comes at the cost of expressiveness: HFRC relies solely on the degrees of nodes within a hyperedge, disregarding the broader connectivity patterns to adjacent hyperedges. As a result, it fails to capture important structural roles, such as distinguishing bridging hyperedges that connect disparate communities from those embedded within cohesive modules. Moreover, the curvature values produced often span large negative ranges with no intrinsic reference scale, limiting interpretability and comparative analysis across a network. These limitations highlight a pressing need for a hypergraph curvature measure that balances expressiveness and computational efficiency, enabling principled structural analysis in higher-order networked systems.

To address these limitations, we introduce  hypergraph lower Ricci curvature (HLRC), a novel extension of lower Ricci curvature for graphs\cite{park2024lower}, that captures higher-order connectivity through an effective, scalable manner. HLRC offers a closed-form, computationally efficient measure that quantifies the relational strength between node pairs by counting their co-memberships across hyperedges, with each contribution appropriately normalized to reflect interaction intensity in multi-node contexts. We evaluate HLRC on special structured hypergraphs, synthetic stochastic-block-model hypergraphs, and five real-world datasets, demonstrating that it consistently outperforms existing formulations HORC and HFRC in distinguishing community-like from bridge-like hyperedges, enhancing community detection in node-labeled settings, revealing interpretable curvature patterns in hyperedge-labeled networks, and supporting robust hypergraph clustering. These results highlight HLRC as a versatile geometric descriptor that advances structural understanding of complex higher-order networks, while providing a robust foundation for a wide range of downstream tasks, including node classification, anomaly detection, and generative modeling, across domains spanning biology, medicine, and social systems.

\section{Results}
\subsection{Characterizing hypergraph topology through curvature.}

HLRC is formulated for unweighted, undirected hypergraphs and integrates the geometric sensitivity of Ollivier-Ricci curvature with the computational simplicity of Forman’s method, yielding a interpretable and scalable measure of higher-order structure. For a given hyperedge $e$ of degree greater than two, HLRC is defined as
\begin{equation}\label{eq:hlrc}
    \text{HLRC}(e) = \sum_{v\in e} \frac{1}{n_v} + \frac{n_e+d_e/2-1}{\max_{v\in e} n_v} + \frac{n_e+d_e/2-1}{\min_{v\in e} n_v} - 1,
\end{equation}
where $n_v$ is the number of neighbors of node $v$, $n_e$ is the number of neighbors of hyperedge $e$, and $d_e$ is the degree (cardinality) of hyperedge $e$ (see ``Hypergraphs'' in Methods). This formulation  integrates two key components to capture a nuanced balance between a hyperedge’s internal and external connectivity contributions to curvature. First, the local node contribution, expressed as $\sum_{v \in e} \frac{1}{n_v}$, aggregates the inverses of the neighborhood sizes of nodes in $e$, thereby capturing how local connectivity influences curvature.  Second, hyperedge-level adjustments are captured by the terms $\frac{n_e+d_e/2-1}{\max_{v\in e} n_v}$ and $\frac{n_e+d_e/2-1}{\min_{v\in e} n_v}$, which jointly reflect the broader connectivity of the hyperedge by incorporating its neighborhood size and degree, normalized by the largest and smallest node neighborhood sizes within the hyperedge. The closed-form nature of Equ. \eqref{eq:hlrc} renders HLRC both analytically tractable and computationally efficient. Importantly, HLRC inherits desirable universal boundedness from its graph-based analogue, being provably bounded between $-1$ and 1 (see Supplementary Note 2). Values approaching 1 are indicative of tightly knit, clique-like structures, whereas values near $-1$ suggest structural bottlenecks or sparse interconnectivity, such as bridging hyperedges between otherwise disconnected regions of the network. 

\begin{figure}
    \centering
    \includegraphics[width=\linewidth]{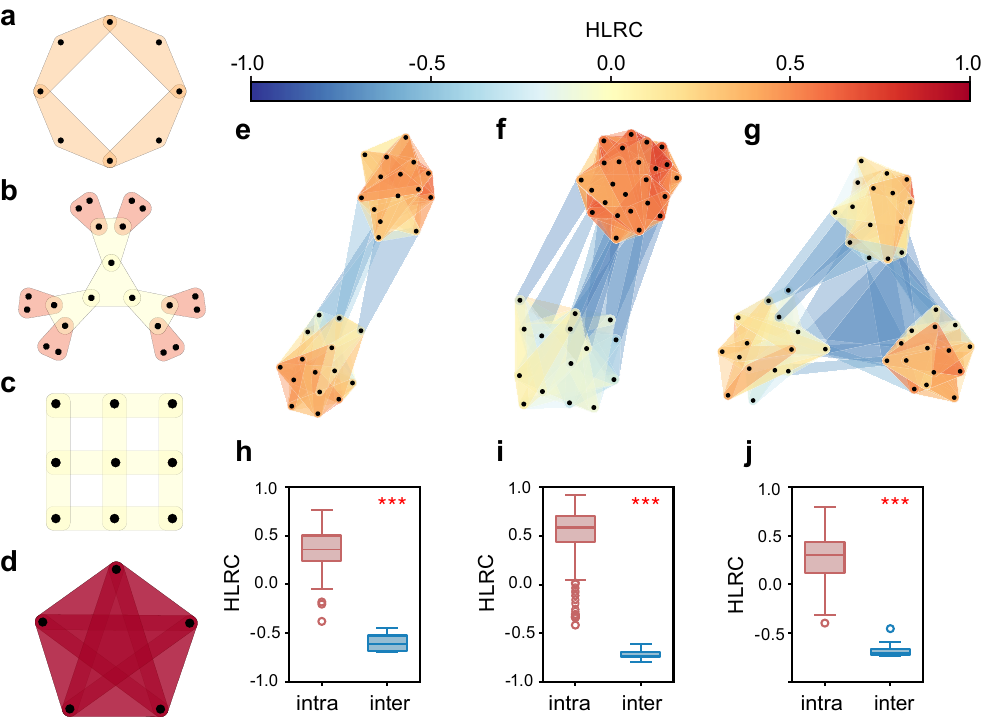}
    \caption{\textbf{Synthetic hypergraphs and their curvature properties.} 
    \textbf{a-d}. Examples of 3-uniform hypergraphs with distinct structural patterns.
    \textbf{a}. A 1-intersecting hypercycle.
    \textbf{b}. A 1-intersecting hypertree. \textbf{c}. A 1-intersecting 2-reguluar hypergrid.
    \textbf{d}. A complete hypergraph. 
    \textbf{e-g}. Synthetic hypergraphs generated using the stochastic block model.
    \textbf{e}. Two equal-sized communities (15 nodes each).
    \textbf{f}. Two unequal-sized communities (15 and 25 nodes).
    \textbf{g}. Three equal-sized communities (15 nodes each). 
    \textbf{h-j}. Distributions of HLRC values for intra- versus inter-community hyperedges corresponding to \textbf{e}-\textbf{g}. Statistical significance was evaluated using the Wilcoxon rank-sum test.}
    \label{fig:1}
\end{figure}

We first applied HLRC to specific classes of 3-uniform hypergraphs, including hypercycles, hypertrees, hypergrids, and complete hypergraphs (see ``Special uniform hypergraphs'' in Methods). The results align with those observed in analogous graph settings. In the case of hypercycles (\textbf{Fig. \ref{fig:1}}a), hyperedges always exhibit non-negative curvature, which reflects the locally cyclic nature of connectivity. This non-negative curvature arises because each node in a hypercycle is connected to a restricted set of neighbors, forming a regular, closed structure that promotes uniformity and cohesion across the hyperedges. For hypertrees (\textbf{Fig. \ref{fig:1}}b), there is a distinct variation in curvature depending on the position of the hyperedge within the tree. Specifically, leaf (terminal) hyperedges tend to have higher curvature compared to the non-terminal hyperedges. This is because they are connected to nodes with fewer neighbors and exhibit reduced local branching, resulting in a higher curvature at these positions. In contrast, non-terminal hyperedges in hypertrees, which are connected to nodes with more neighbors and exhibit more branching, generally have lower curvature. In the hypergrid (\textbf{Fig. \ref{fig:1}}c), the curvature of each hyperedge is zero, capturing the intuition of a locally flat geometry. This suggests that hyperedges in a hypergrid maintain a relatively uniform and non-cyclical structure, with the nodes being uniformly distributed across the grid. Finally, in the complete 3-uniform hypergraph (\textbf{Fig. \ref{fig:1}}d), all hyperedges attain a curvature of one, which is the theoretical upper bound in our definition. This indicates maximal connectivity and saturation, where each node is connected to every other node, leading to the most highly connected structure possible within the given hypergraph model. More special hypergraph visualization under varying parameters are illustrated in \textbf{Supplementary Fig. 1}.

\subsection{Leveraging curvature to capture hypergraph geometry for community detection.}

To assess the ability of HLRC in detecting community structure, we  applied it to synthetic hypergraphs generated by a stochastic block model\cite{ghoshdastidar2014consistency}  (see ``Synthetic hypergraphs'' in Methods). The results, shown in \textbf{Fig. \ref{fig:1}}e–j, comprise three 3-uniform hypergraphs with varying node counts (30, 40, and 45) and number of communities (2, 2, and 3). In each hypergraph, intra-community hyperedges appear with probability 0.1, whereas inter-community hyperedges emerge at probability 0.001. Across all simulations, intra-community hyperedges consistently exhibit significantly more positive curvature than inter-community ones ($p$-value $<$ 0.001; Wilcoxon rank-sum test). This difference stems from their local connectivity patterns: intra-community hyperedges join nodes that share many common neighbors, creating a tightly knit, highly overlapping neighborhood and thus high curvature, whereas inter-community hyperedges bridge nodes with few mutual neighbors, yielding sparse overlap and consequently low curvature. Because HLRC values form a clear bimodal distribution, one can simply threshold or cluster hyperedges by curvature to separate dense, community-internal links from sparse, boundary links.  Examples of synthetic 4-uniform hypergraphs generated using the stochastic block model are shown in \textbf{Supplementary Fig. 2}.

\begin{table}[ht]
    \centering
    \begin{tabular}{l c c c c c c}
        \hline
         & $m$ & $n$ & $d_v$ & $d_e$ & $D_e$ & $\%d_e=2$ \\
        \hline
        Contact High School  & $7818$ & $327$ & $55.6\pm27.1$ & $2.4\pm0.5$ & $5$ & $70.3\%$ \\
        MADStat  & $83331$ & $47311$ & $3.7\pm 7.8$ & $2.1\pm1.1$ & $33$ & $40.8\%$\\
        MAG-10  & $51888$ & $80198$ & $2.3\pm4.6$ & $3.5\pm1.6$ & $25$ & $29.9\%$\\
        \hline
    \end{tabular}
    \caption{\textbf{Summary statistics of individual hypergraph datasets.} For each individual dataset, we report the number of hyperedges ($m$) and nodes ($n$), the mean$\pm$SD of node degree ($d_v$) and hyperedge size ($d_e$), the maximum hyperedge size ($D_e$), and the percentage of pairwise hyperedges ($\%d_e=2$).}
    \label{tab:1a}
\end{table}
\begin{table}[ht]
    \centering
    \begin{tabular}{l c c c c c c }
        \hline
         & $N$ & $m$ & $n$ & $D_e$ & $\%d_e=2$ \\
        \hline
        Stex & $355$ & $18844.2\pm50323.4$ & $433.6 \pm 745.7$ & $7$ & $29.4\%$\\
        Mus & $1944$ & $255.6\pm417.1$ & $24.5\pm 6.6$ & $12$ & $12.3\%$\\
        \hline
    \end{tabular}
    \caption{\textbf{Summary statistics of hypergraph collection datasets.} For each individual dataset, we report the number of hypergraphs ($N$), the mean$\pm$SD of the number of hyperedges ($m$) and nodes ($n$), the maximum hyperedge size ($D$), and the percentage of pairwise hyperedges ($\%d2$).}
    \label{tab:1b}
\end{table}

\begin{figure}
    \centering
    \includegraphics[width=\linewidth]{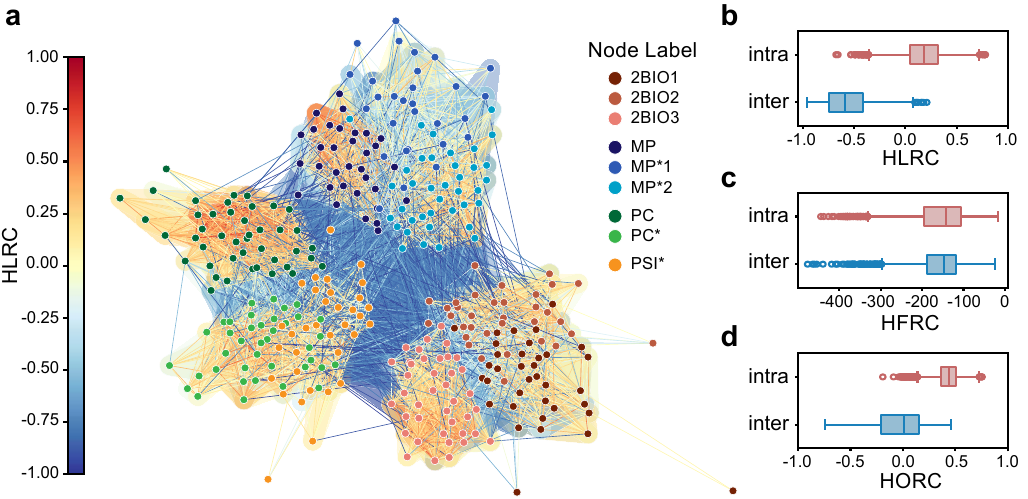}
    \caption{\textbf{Hypergraph curvature analysis of high school contact networks.} 
    \textbf{a}. Contact high school hypergraph representation where nodes correspond to students and hyperedges (colored by HLRC values) represent group interactions. Hyperedges within the same classroom tend to exhibit higher (more positive) curvature than those spanning multiple classrooms. 
    \textbf{b}. Distribution of HLRC values for intra-classroom versus inter-classroom hyperedges. \textbf{c-d}. Corresponding comparisons using alternative curvature measures: HFRC \textbf{c} and HORC \textbf{d}.}
    \label{fig:2}
\end{figure}
We next applied HLRC to the contact high school dataset\cite{Mastrandrea-2015-contact, chodrow2021hypergraph} (\textbf{Table \ref{tab:1a}}), which records proximity interactions among 327 students across nine second-year classes via wearable sensors. Specifically, the classes include three Biology groups (2BIO1, 2BIO2, 2BIO3), three Mathematics–Physics groups (MP, MP*1, MP*2), two Physics–Chemistry groups (PC, PC*), and one Engineering Sciences group (PSI*). The resulting hypergraph comprises 7,818 hyperedges, each representing a group interaction detected within 1-1.5 meters over a 20-second interval. Given that students tend to interact more frequently within their own classrooms, and particularly among classrooms sharing a prefix that indicates a common academic focus, we anticipated that curvature measures would reveal both coarse and fine grained community structures. To test this, we computed HLRC for each hyperedge and also evaluated the two existing hypergraph curvature measures, HORC and HFRC, for comparison. For collision-free visualization of the hypergraph, we applied the Fruchterman-Reingold force-directed algorithm\cite{kobourov2012spring} to its graph projection to obtain two-dimensional node embeddings (\textbf{Fig. \ref{fig:2}}a). Three dominant clusters emerge: MP at the top, 2BIO on the right, and PC and PSI on the left, with tighter subclusters corresponding to individual classrooms. We then colored each hyperedge according to its HLRC value (positive $\rightarrow$ yellow/red; negative $\rightarrow$ light/dark blue) and observed that hyperedges at the boundaries between clusters appear dark blue (strongly negative HLRC), those connecting subclusters within the same cluster are light blue (moderately negative HLRC), and intra-classroom hyperedges range from yellow to red (positive HLRC). This visually striking gradient confirms that HLRC effectively distinguishes boundary hyperedges from those within communities. To quantify this separation and compare different hypergraph curvature measures, we further labeled each hyperedge as “intra-community” if it joins students in the same classroom, and “inter-community” otherwise. As shown in \textbf{Fig. \ref{fig:2}}b-d, both HLRC and HORC sharply distinguish these categories, assigning significantly lower curvature to intra-community hyperedges and higher curvature to inter-community hyperedges, whereas HFRC shows no meaningful separation. Moreover, HLRC achieves this separation with minimal computational cost, whereas HORC requires over a hundred times longer computation time (\textbf{Supplementary Fig. 3}).
These results demonstrates that HLRC delivers accurate higher-order community signals with minimal runtime overhead.

\subsection{Revealing hidden hyperedge patterns across classes through curvature analysis.}

After examining the role of curvature in community detection on hypergraphs with labeled nodes, we next turned to hypergraphs with labeled hyperedges to explore the structural information that HLRC can reveal. We analyzed two co-authorship hypergraphs, Multi‐Attribute dataset (MADStat\cite{ji2022co}) and MAG-10\cite{amburg2020clustering, Sinha-2015-MAG} (\textbf{Table \ref{tab:1a}}). In both datasets, nodes represent authors and hyperedges represent publications linking all co-authors. Each hyperedge in MADStat is annotated with the statistical journal and year of publication, while in MAG-10 it is labeled with the corresponding computer science conference.  By comparing results across domains, i.e., statistics and computer science, we assessed the consistency of HLRC-derived patterns and the extent to which they capture domain-specific structural organization.

\begin{figure}
    \centering
    \includegraphics[width=\linewidth]{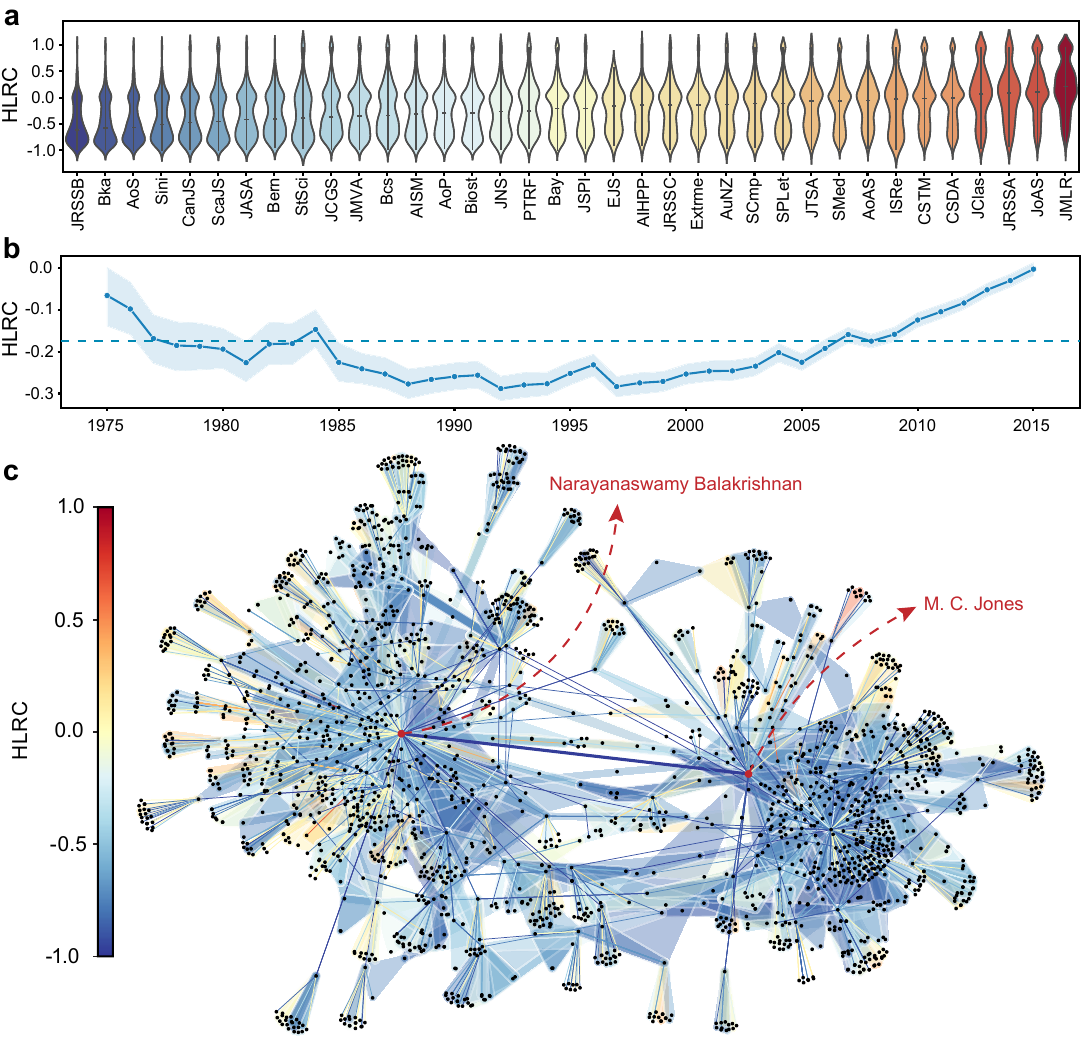}
    \caption{\textbf{Hyperedge curvature distributions and temporal trends in the MADstat co-authorship network.} \textbf{a}. Distribution of hyperedge curvature values across journals reveals a contrast between theoretical journals, which exhibit more negative curvature, and applied journals, which show more positive curvature. \textbf{b}. Temporal trajectory of average hyperedge curvature from 1975 to 2020, characterized by an initial decline (1975-1988), a period of stabilization (1988-2000), and a subsequent increase beginning in 2000. \textbf{c}. Two-hop collaboration subgraph centered on the hyperedge with the most negative curvature, highlighting disconnected author communities with no shared co-authors.}
    \label{fig:3}
\end{figure}

For each co-authorship hypergraph, we computed HLRC for every hyperedge derived a curvature distribution for each publication venue by grouping hyperedges by their associated journal or conference labels. In the MADStat hypergraph, these distributions vary systematically (\textbf{Fig. \ref{fig:3}}a): theoretical journals exhibit consistently lower (more negative) HLRC values, while applied journals tend to show higher curvature. For example, the \textit{Journal of the Royal Statistical Society: Series B} (JRSSB), known for publishing theoretical methodological advances, displays the lowest average HLRC. In contrast, the \textit{Journal of Machine Learning Research} (JMLR), which emphasizes algorithms, shows the highest. This divergence likely reflects distinct collaboration patterns: papers in theoretical journals often involve principal investigators collaborating across broader, less densely connected co-author networks, resulting in hyperedges with sparse overlap, whereas applied research more frequently emerges from tightly knit, cohesive author groups. An analogous trend appears in the MAG-10 dataset (see Supplementary Note 3; \textbf{Supplementary Fig. 4}). Additionally, we examined the temporal dynamics of collaboration by tracking the evolution of mean curvature over time, using each paper’s publication year in the MADStat hypergraph. As shown in \textbf{Fig. \ref{fig:3}}b, the average HLRC declines from 1975 to 1988, remains roughly constant through 2000, and then rises markedly thereafter. The early decline suggests a shift toward less cohesive collaboration patterns, possibly reflecting the diversification of statistical subfields. Conversely, the post-2000 rise coincides with the growth of data-driven and machine learning research, which often involves larger, more interdisciplinary teams with denser co-authorship structure.
Furthermore, we extracted and plotted the two-hop co-authorship subgraph centered on the hyperedge with the lowest HLRC value (\textbf{Fig. \ref{fig:3}}c). In this subgraph, two densely connected but entirely disjoint author communities emerge: one author (N. Balakrishnan) has 186 collaborators and the other (M.C. Jones) 56, but they share no common neighbors. This complete absence of overlap drives the HLRC of that hyperedge to $-0.976$, underscoring the metric’s ability to capture sharp community boundaries and structural gaps in collaboration networks. More one-hop subgraphs centered on the hyperedges with extremely low HLRC value are visualized in \textbf{Supplementary Fig. 4}.

\subsection{Extending curvature-based clustering to hypergraphs collections.}

Finally, we extended our evaluation to two hypergraph collection datasets, Stex\cite{coupette2023ollivier} and Mus\cite{coupette2023ollivier} (\textbf{Table \ref{tab:1b}}), to assess the effectiveness of HLRC in unsupervised hypergraph clustering. In the Stex dataset, each hypergraph represents a StackExchange site, with nodes representing users and hyperedges corresponding to questions annotated with up to five tags. In the Mus dataset, each hypergraph models a musical piece, with nodes as discrete pitch classes and hyperedges as chords sounding for specified durations at particular time offsets. For both collections, we selected three semantically or stylistically coherent groups for clustering analysis: language, religion, and technology-oriented (`nerd') forums from Stex, and Renaissance, Baroque, and later compositions from Mus (The rationale for grouping was explained in Supplementary Note 3). 
\begin{table}[ht]
\centering
\begin{tabular}{l cc cc}
\hline
\multirow{2}{*}{}  & \multicolumn{2}{c}{ARI} & \multicolumn{2}{c}{AMI} \\
\cmidrule(lr){2-5}
 & HLRC & HORC & HLRC & HORC \\
\hline
Stex & $0.536$ & $0.233$ & $0.438$ & $0.169$ \\
Mus  & $0.398$ & $0.205$ & $0.321$ & $0.182$ \\
\hline
\end{tabular}
\caption{\textbf{Clustering performance on hypergraph collections.} Adjusted Rand index (ARI) and adjusted mutual information (AMI) scores are shown for HLRC‐ and HORC‐based embeddings on the Stex and Mus datasets, demonstrating that HLRC yields substantially higher clustering accuracy across both metrics.}
\label{tab:2}
\end{table}

\begin{figure}[ht]
    \centering
    \includegraphics[width=0.8\linewidth]{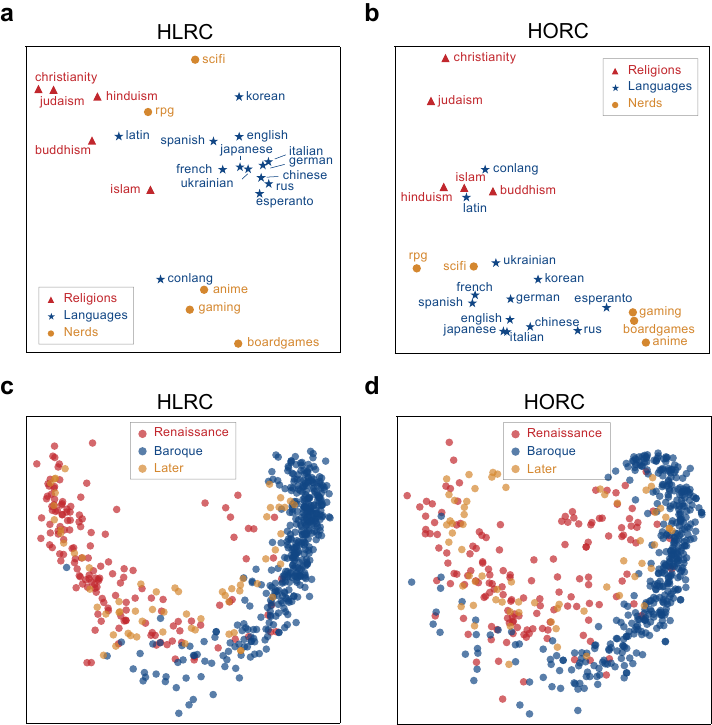}
    \caption{\textbf{kPCA embeddings of curvature histograms reveals semantic and stylistic clusters in hypergraph collections.} \textbf{a-b}. Two-dimensional embeddings of Stex hypergraphs obtained by applying an RBF kernel to curvature histograms followed by kPCA. Features In \textbf{a} are derived from HLRC distributions while Features in \textbf{b} are from HORC distributions. Points are colored by forum category (Religion, Language, Nerds). \textbf{c-d}. Two-dimensional embeddings for Mus hypergraphs, with points colored by musical era (Renaissance, Baroque, Later), based on HLRC (\textbf{c}) and HORC (\textbf{d}). histograms. HLRC-based projections yield more compact and well-separated clusters compared to those derived from HORC.}
    \label{fig:4}
\end{figure}

We then implemented hypergraph clustering to obtain two-dimensional hypergraph embeddings using HLRC and HORC  (see ``Hypergraph clustering'' in Methods). In the Stex dataset, HLRC-based embeddings form sharply defined clusters, with the language community emerging as a distinct and self-contained group, whereas HORC results in more diffuse and overlapping groupings (\textbf{Fig. \ref{fig:4}}a, b). Similarly, in the Mus dataset, embeddings derived from HLRC distinctly separate Baroque compositions from those of the Renaissance and later periods, despite some overlap between the latter two, while HORC-based embeddings show a conflation of Renaissance and Baroque works clustered near the lower center of the plot, with later-period pieces encroaching on the Renaissance region and a noticeably less cohesive Renaissance cluster (\textbf{Fig. \ref{fig:4}}c, d). Quantitatively, HLRC significantly outperforms HORC across both clustering metrics, the adjusted Rand index (ARI\cite{hubert1985comparing}) and the adjusted mutual information score (AMI\cite{vinh2009information}) (\textbf{Table \ref{tab:2}}).  On Stex, the HLRC-based approach achieves an ARI of 0.536 and AMI of 0.438, compared to 0.233 and 0.169, respectively, for HORC. On Mus, HLRC attains an ARI of 0.398 and an AMI of 0.321, markedly surpassing HORC’s scores of 0.205 and 0.182. These results demonstrate that HLRC generalizes effectively across domains and scales, capturing both fine-grained topological variation within individual hypergraphs and global structural patterns across entire collections.

\section{Discussion}

In this work, we introduced  hypergraph lower Ricci curvature (HLRC), a unified curvature metric that reconciles computational efficiency and rich geometric discrimination. Existing hypergraph curvature measures, HORC and HFRC, occupy opposing extremes. HORC provides interpretable and bounded curvature values that reflect meaningful structural and relational properties of hypergraphs. However, this interpretability comes at a steep computational cost due to its complex formulation, and its curvature range is asymmetric, which can complicate comparisons and downstream applications. On the other hand, HFRC achieves near-linear runtime, making it scalable to large datasets, but its curvature values are primarily governed by local degree statistics rather than genuine connection patterns, resulting in limited geometric insight. Moreover, HFRC lacks universal bounds, reducing its interpretability and making it difficult to compare curvature values across different hypergraphs. HLRC bridges this divide by combining HFRC’s local, near-linear scalability (\textbf{Table \ref{tab:3}}; \textbf{Supplementary Fig. 3}) with HORC’s ability to capture nuanced connectivity within a symmetric, bounded interval,  offering a versatile tool for hypergraph analysis.

\begin{table}[ht]
    \centering
    \begin{tabular}{l c c c}
    & \textbf{HLRC}  & \textbf{HFRC} & \textbf{HORC} \\
    \hline
    Theoretical & $O(m\bar{d_e}n)$ & $O\bigl(m\bar{d_e}\bigr)$ & $O(m \bar{d_e}^2 D_v^3)$\\
    \hline
    Contact High School & 40 ms & 6 ms & 3252 ms \\
    MADStat & 311 ms & 93 ms & $>$ 3h \\ 
    MAG-10 & 326 ms & 56 ms  & $>$ 3h \\
    Stex & 192 s& 10 s& 11714 s\\
    Mus & 3 s & 1 s & 146 s\\
     \hline
    \end{tabular}
    \caption{\textbf{Theoretical and empirical runtimes of hypergraph curvature measures.} Top row shows asymptotic time complexities for a hypergraph with $n$ nodes, $m$ edges, average edge size $\bar{d_e}$, and maximum node degree $D_v$. Subsequent rows report wall‐clock runtimes across all threads for each dataset, with HLRC and HFRC implemented in single‐threaded Python and HORC in Julia (single thread for individual datasets, four threads for collections). HORC did not complete within a three-hour timeout on the MADStat and MAG-10 datasets.}
    \label{tab:3}
\end{table}

Evaluated on both synthetic and real-world hypergraphs, HLRC demonstrates robust performance across multiple analytical scales. At the node level, HLRC effectively captures community structure by assigning strongly positive curvature to hyperedges within cohesive groups, while attributing negative curvature to hyperedges that bridge distinct communities. This dual characterization enables clear separation of communities in stochastic block models and human-contact networks, reflecting HLRC’s sensitivity to underlying network organization and its potential for identifying functionally significant connections. At the hyperedge level, HLRC reveals semantic and functional distinctions within co-authorship networks annotated by publication venue. By uncovering structural variations linked to different academic disciplines and exposing temporal trends in collaborative behavior, HLRC provides a powerful geometric lens for probing the dynamics and heterogeneity of scholarly collaboration. Finally, at the global hypergraph scale, embeddings derived from HLRC successfully recover meaningful semantic groupings, effectively summarizing complex higher-order relationships in a compact representation. This global perspective preserves the intrinsic structural identity of the hypergraph, enabling downstream tasks such as clustering, classification, and visualization. Collectively, these results highlight HLRC’s strength as a unifying metric that bridges local, meso-, and global scales in hypergraph analysis.

HLRC opens new avenues for hypergraph analyses across a wide range of domains. In systems biology, hyperedges naturally represent higher-order interactions such as protein complexes or regulatory pathways. Those exhibiting extreme positive curvature often correspond to functionally cohesive complexes or tightly coordinated pathways, while hyperedges with pronounced negative curvature highlight critical interactions that bridge distinct functional modules. This geometric perspective provides a powerful framework for prioritizing experimental investigations and guiding drug target discovery by identifying biologically significant subnetworks. In social and epidemiological contexts, HLRC’s sensitivity to the distinction between intra-community cohesion and inter-community bridging facilitates the detection of tightly knit groups and the identification of pivotal bridge events that govern information flow or disease transmission. Furthermore, HLRC holds promise in advancing hypergraph neural networks by addressing key challenges such as over-squashing and over-smoothing. By selectively amplifying connections associated with negatively curved hyperedges and masking those with highly positive curvature, HLRC helps preserve localized structural features while enhancing global representation learning. Together, these capabilities position HLRC as a versatile and powerful tool for uncovering and leveraging complex higher-order relationships in real-world systems.

Despite these advances, several limitations persist. HLRC is currently formulated for unweighted, undirected hypergraphs and assumes a uniform treatment of neighborhood overlap. Extending HLRC to weighted, directed, and temporal hypergraphs would significantly broaden its applicability and better capture the complexity of real-world interaction patterns. Another important direction involves accounting for heterogeneous roles of nodes within hyperedges, recognizing that contributions within group interactions are often uneven and context-dependent. Furthermore, while HLRC is both interpretable and computationally efficient, its theoretical foundations remain largely heuristic. Establishing formal connections with foundational mathematical frameworks, such as optimal transport theory or curvature flow dynamics adapted to hypergraph settings, could deepen its theoretical rigor and facilitate principled generalizations and more robust analytical tools.

\section{Methods}

\subsection{Hypergraphs.}
Formally, a hypergraph with $n$ nodes and $m$ hyperedges is defined as $\mathcal{H} = \{\mathcal{V}, \mathcal{E}\}$, where $\mathcal{V} = \{v_1,v_2, \dots, v_n\}$ denotes the set of nodes and $\mathcal{E} = \{e_1,e_2, \dots, e_m\}$ denotes the set of hyperedges such that $e_j\subseteq \mathcal{V}$ for $j=1,2,\dots,m$. The structure of a hypergraph $\mathcal{H}$ can be encoded using an incidence matrix $\mathbf{H} \in \{0,1\}^{n \times m}$, where $\mathbf{H}_{ij} = 1$ if node $v_i$ belongs to hyperedge $e_j$, and $\mathbf{H}_{ij} = 0$ otherwise. The degree of a node $v_i$, denoted by $d(v_i)$, is the number of hyperedges that include $v_i$, which can be computed as $d(v_i) = \sum_{j=1}^m \mathbf{H}_{ij}$. Similarly, the degree of a hyperedge $e_j$, denoted by $d(e_j)$, is the number of nodes it contains, which can be computed as $d(e_j) = \sum_{i=1}^n \mathbf{H}_{ij}$. Two nodes are considered adjacent if they belong to the same hyperedge. The neighborhood of a node $v_i$, denoted by $\mathcal{N}(v_i)$, comprises all nodes adjacent to $v_i$, with its size  $n(v_i) = |\mathcal{N}(v_i)|$. Likewise, the neighborhood of a hyperedge $e_j$, denoted by $\mathcal{N}(e_j)$, represents the intersection of the neighborhoods of all nodes in $e_j$, with its size $n(e_j) = |\mathcal{N}(e_j)|$. For simplicity, we used $v$ and $e$ to denote nodes and hyperedges, $d_v$ and $d_e$ for their respective degrees,  and $n_v$ and $n_e$ for their neighborhood sizes in defining HLRC and special hypergraphs.

\subsection{Special uniform hypergraphs.}
To derive closed-form expressions for HLRC in uniform hypergraphs, we first introduce a set of standard properties. A hypergraph is said to be connected if there exists a path between any pair of nodes through a sequence of hyperedges. If every hyperedge connects exactly $k$ nodes, the hypergraph is called $k$-uniform, i.e., $d_e=k$ for $\forall e \in \mathcal{E}$. A hypergraph is $r$-regular if every node lies in exactly $r$ hyperedges. We say it is $s$-intersecting when any two distinct hyperedges meet in exactly $s$ nodes, and $c$-coocurrent when every adjacent node pair appears together in precisely $c$ hyperedges, i.e. $v_i \sim v_j$, then $|\{e\in \mathcal{E}: \{i,j\}\subseteq e\}|=c$. Notably, a simple graph emerges as a special case when $k = 2$ and $s = c = 1$.

In a complete $k$-uniform hypergraph, every possible $k$-subset of nodes forms a hyperedge. Consequently, the HLRC of any hyperedge in such a hypergraph equals 1, attaining the theoretical upper bound in the densest possible configuration. We next consider a hypercyle — a cyclic sequence of hyperedges $e_1, e_2, \ldots, e_m$ where each consecutive pair shares at least one node ($e_j \cap e_{j+1} \neq \emptyset$ for $j = 1, \ldots, m - 1$), and the final edge $e_m$ intersects $e_1$. For a $k$-uniform, $s$-intersecting hypercycle with $m$ hyperedges, the HLRC admits closed-form expressions that vary with the relationship between $k$ and $s$. For example, when the hypercyle is large enough i.e.$m\ge2k$, we have
\[
    \text{HLRC}(e)=\begin{cases}
    \frac{k/2-2s}{k-1}+\frac{k/2+2s-1}{2k-s-1}, &k>2s \\
    \frac{s-1}{3s-1} , &k=2s \\
    \vdots &\text{(intermediate regimes)} \\
    0, &k=s+1
    \end{cases}.
\]
These formulas capture a smooth transition in curvature as a function of intersection size $s$. As $s$ increases, HLRC generally decreases, reaching zero when $k = s + 1$, reflecting a setting where each node has many neighbors but shares none within a single hyperedge. (The formulas for the $m<2k$ case are detailed in Supplementary Note 2.)  A third canonical class is the hypertree, which imposes acyclicity by requiring that each hyperedge corresponds to the node set of a connected subtree within an underlying tree structure\cite{brandstadt1998dually}. For a $k$-uniform, $r$-regular, 1-intersecting hypertree, the HLRC evaluates to $\text{HLRC}(e) = \frac{2}{r} - 1$ for non-terminal hyperedges, and $\text{HLRC}(e) = \frac{(r + 1)k - 2r}{2r(k - 1)}$ for terminal ones. Finally, hypergrids arise from sliding an $r$-node window along simple paths of length $r$ in a fixed lattice graph. Formally, a hypergraph is defined as a hypergrid\cite{coupette2023ollivier} if there exists a lattice $\mathcal{L} = \{\mathcal{V}, \mathcal{E}_\mathcal{L}\}$ such that $\mathcal{E} = \{e \in \binom{\mathcal{V}}{r} \mid e \text{ corresponds to a path of length } r \text{ in } \mathcal{L} \}$. When this construction yields a $k$-uniform, 1-intersecting, 2-regular hypergrid, the HLRC of each hyperedge is zero. Complete derivations for these special cases are provided in Supplementary Note 2.

\subsection{Synthetic hypergraphs.}

A hypergraph stochastic block model (HSBM\cite{ghoshdastidar2014consistency}) generalizes the classic graph-based stochastic block model by allowing hyperedges to connect more than two nodes and by using community labels to govern the probability of each multi-way interaction. In an $k$-uniform HSBM, the set of $n$ nodes is first partitioned into $r$ blocks or “communities”. For every subset $\mathcal{S} \subseteq \mathcal{V}$ of size $k$, the hyperedge $\mathcal{S}$ is added with probability $P_{i_1i_2\dots i_k}$ where $i_j$ is the community of $j^{th}$ node in $\mathcal{S}$.  In this study, it is simplied to two parameters by setting $P_{i_1i_2\dots i_k}=a$ if all $i_j$ are equal and $P_{i_1i_2\dots i_k}=b$ otherwise, capturing assortative structure via within-block probability $a$ and cross-block $b$. This higher-order model avoids information loss from reducing to pairwise links and underpins sharp theoretical results on weak and exact recovery thresholds, as well as efficient spectral and semidefinite algorithms for community detection.

\subsection{Hypergraph clustering.}

To assess whether hyperedge curvature effectively captures group‐level organization within hypergraphs, we implemented the following four‐stage pipeline. First, for each hypergraph in our collection, we calculated the curvature value of every hyperedge using both HLRC and HORC measures. Curvature scores were computed on the full hypergraph topology without subsampling, ensuring that edges of all sizes and intersection patterns contributed to the analysis. Secondly, the resulting curvature values were aggregated into frequency histograms. We partitioned the curvature range into bins of width 0.05, resulting in 40 bins spanning the HLRC interval $(-1, 1]$ and 60 bins spanning the HORC interval $[-2, 1]$. Each histogram was normalized to sum to one, producing a probability‐density representation for each hypergraph’s curvature distribution. Thirdly, for nonlinear dimensionality reduction of the curvature‐histogram features, we first computed a pairwise similarity matrix $\textbf{D}$ using the radial basis function (RBF) kernel\cite{buhmann2000radial} on the $N\times B$ histogram matrix $\textbf{G}$, where $N$ is the number of hypergraphs in the dataset and $B$ is the number of bin. Specifically, \[\textbf{D}_{ij} = \exp{(-\gamma ||\textbf{G}_i-\textbf{G}_j||^2)}\] with bandwidth $\gamma$ being $1/B$. We then performed kernel principal component analysis (kPCA\cite{scholkopf1997kernel}) directly on this precomputed kernel matrixm requesting two components, and setting a convergence tolerance of $10^{-5}$ and a maximum of 2000 iterations. Lastly, the two-dimensional kPCA embeddings were clustered using k-means\cite{macqueen1967some}, with the number of semantic groups in each dataset (e.g., three groups for language, religion, and “nerd” communities in the Stex collection). Cluster assignments were compared to ground‐truth group labels using two complementary metrics: the adjusted Rand index (ARI\cite{hubert1985comparing}) and the adjusted mutual information score (AMI\cite{vinh2009information}). Both scores correct for chance agreement, with values ranging from 0 (no better than random) to 1 (perfect recovery).

\section*{Code and Data Availability}
The complete source code and associated datasets supporting this study are publicly accessible at \url{https://github.com/shiyi-oo/hypergraph-lower-ricci-curvature.git}

\bibliographystyle{unsrt}

\newpage
\begingroup
\let\oldthesection\thesection
\newcounter{oldsec}%
\setcounter{oldsec}{\value{section}}%
\let\oldfigurename\figurename

\renewcommand{\thesection}{S\arabic{section}}%
\setcounter{section}{0} 
\renewcommand{\figurename}{Supplementary Fig.}
\setcounter{figure}{0}
\begin{center}
  \textbf{\Large Supplement of ``Lower Ricci Curvature for Hypergraphs"}
\end{center}
\section{Recap of prior hypergraph curvature definitions.}
In this note, we briefly reviewed the definitions of two existing generalizations of graph curvature to hypergraphs: Hypergraph Olliveir Ricci Curvature (HORC\cite{coupette2023ollivier}) and Hypergraph Forman Ricci Curvature (HFRC\cite{leal2021forman}).

\subsection{Definition of HORC.}
Ollivier–Ricci curvature on a hyperedge is built by comparing how probability mass is distributed among its nodes. Concretely, for an unweighted, undirected hypergraph $\mathcal{H}=\{\mathcal{V},\mathcal{E}\}$, pick a hyperedge $e$ of degree $d_e$. The HORC is defined by
\begin{equation}
    \kappa(e) :=1 - \mathrm{AGG}(e),
\end{equation}
where the aggregation operator $AGG$ averages pairwise transport costs:
$$\mathrm{AGG}(e) :=  \frac{2}{d_e\,(d_e-1)} \sum_{\{v_i,v_{j}\}\subseteq e} W_{1}\bigl(\mu_{i},\mu_{j}\bigr).$$
Here, $d_e$ is the size of $e$ and $W_{1}(\cdot,\cdot)$ denotes the 1-Wasserstein distance between the probability measures $\mu_i$ and $\mu_j$ supported on nodes $v_i$ and $v_j$, respectively. In our implementation, each node $v$ carries a simple random-walk distribution over its neighbors (assigning mass $1/n_v$ to each neighbor and zero probability of remaining at $v$). Alternative aggregation rules and measure definitions are well documented in \cite{coupette2023ollivier}; we selected this particular pairing of $\mathrm{AGG}$ and $\mu$ so that the HORC formulation mirrors the HLRC philosophy for direct comparison. Regardless of which reasonable choices one makes for $\mathrm{AGG}$ and $\mu$, the resulting HORC value always lies in the interval $[-2, -1]$.

\subsection{Definition of HFRC.}
HFRC is a purely combinatorial measure that assigns each hyperedge a curvature based on two simple ingredients: the size of the hyperedge (how many nodes it contains) and how many other hyperedges each of those nodes participates in. Formally, for an unweighted, undirected hypergraph, HFRC for a hyperedge $e$ is given by
\begin{equation}
    F(e) = 2d_e-\sum_{v\in e}d_v,
\end{equation}
where $d_e$ is the size of $e$ and each $d_v$ is the number of hyperedges incident to node $v$.
Furthermore, HFRC values lie in a range determined by extreme configurations of node degrees within $e$. In particular, if every node $v\in e$ has exactly $m$ incident hyperedges (so $\sum_{v\in e}d_v=md_e$), then $F(e)=d_e(2-m)$, which attains its minimum when $m$ is as large as possible. Conversely. $F(e)$ achieves its maximum value of $d_e$ if each node in $e$ participates only in $e$ itself (i.e. $\sum_{v\in e}d_v=d_e$). Thus, unlike HORC, which always lies in $[-2,1]$, HFRC can vary from $d_e(2-m)$ up to $d_e$.

\section{Bounds, computation complexity and special hypergraphs.}

In this note, we would firstly prove the lower and upper bound of HLRC, then compute its computational complexity and lastly detail the proofs of HLRC on special hypergraphs.

\subsection{Bounds of HLRC.} 
\begin{theorem}
Let $\mathcal{H}=\{\mathcal{V},\mathcal{E}\}$ be an unweighted, undirected simple hypergraph, and let $e\in\mathcal{E}$ be any hyperedge of size $d_e>1$. Then its HLRC satisfies $$-1<\mathrm{HLRC}(e)\le1.$$
\end{theorem}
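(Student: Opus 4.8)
The plan is to reduce everything to a single combinatorial inequality relating node and hyperedge neighborhoods, after which both bounds follow from elementary algebra. Write $d:=d_e$, abbreviate $N_{\min}=\min_{v\in e}n_v$ and $N_{\max}=\max_{v\in e}n_v$, and set $M=n_e+d/2-1$ for the common numerator of the two fractional terms, so that $\mathrm{HLRC}(e)=\sum_{v\in e}\frac{1}{n_v}+\frac{M}{N_{\max}}+\frac{M}{N_{\min}}-1$. First I would record the basic sign facts: since $v\in e$ is adjacent to the other $d-1$ members of $e$, every $n_v\ge d-1\ge1$, so each summand $1/n_v$ is strictly positive; and since $n_e\ge0$ together with $d\ge2$ gives $M\ge0$, both fractional terms are nonnegative.

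The crux is the inequality $n_v\ge n_e+d-1$ for every $v\in e$. I would prove it by observing that the hyperedge neighborhood $\mathcal{N}(e)=\bigcap_{w\in e}\mathcal{N}(w)$ is contained in $\mathcal{N}(v)$, that the remaining $d-1$ members $e\setminus\{v\}$ also lie in $\mathcal{N}(v)$, and that these two sets are disjoint because no node of $e$ can belong to $\mathcal{N}(e)$: a node is never its own neighbor, so $u\notin\mathcal{N}(u)\supseteq\mathcal{N}(e)$ for every $u\in e$. Counting these two disjoint subsets of $\mathcal{N}(v)$ yields $n_v\ge n_e+(d-1)$, and in particular $N_{\min}\ge n_e+d-1$.

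With these in hand the lower bound is immediate: the sum $\sum_{v\in e}1/n_v$ is strictly positive and the two fractional terms are nonnegative, so $\mathrm{HLRC}(e)+1>0$, i.e.\ $\mathrm{HLRC}(e)>-1$ strictly. For the upper bound I would uniformly bound $1/n_v\le 1/N_{\min}$ to obtain $\sum_{v\in e}1/n_v\le d/N_{\min}$, and use $N_{\max}\ge N_{\min}$ with $M\ge0$ to get $M/N_{\max}\le M/N_{\min}$; adding these gives $\mathrm{HLRC}(e)+1\le (d+2M)/N_{\min}$. Since $d+2M=2(n_e+d-1)$ and $N_{\min}\ge n_e+d-1>0$, this ratio is at most $2$, hence $\mathrm{HLRC}(e)\le1$.

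The only nontrivial ingredient is the neighborhood inequality $n_v\ge n_e+d-1$; once it is established the bounds are pure arithmetic, and the equality case $\mathrm{HLRC}(e)=1$ transparently forces $N_{\min}=N_{\max}=n_e+d-1$, matching the complete-hypergraph configuration noted earlier. I would double-check the boundary conventions, namely whether a node is counted in its own neighborhood and whether $d_e=2$ is admitted, since both the strictness of the lower bound and the nonnegativity $M\ge0$ hinge on $d\ge2$ and on self-exclusion from $\mathcal{N}(\cdot)$.
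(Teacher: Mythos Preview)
Your proof is correct and follows essentially the same route as the paper's: both hinge on the inequality $n_v\ge n_e+d_e-1$ and then bound each term of the HLRC formula uniformly to obtain the upper bound, while the lower bound follows from positivity of the summands. Your argument is in fact slightly more careful than the paper's in justifying the disjointness of $\mathcal{N}(e)$ and $e\setminus\{v\}$ inside $\mathcal{N}(v)$, and in noting that the fractional terms are merely nonnegative (not strictly positive) when $d_e=2$ and $n_e=0$.
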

\begin{proof}
Since each $v\in e$ is adjacent to the other $d_e-1$ vertices in $e$ plus at least $n_e$ additional shared nodes by all $v\in e$, we have $n_v \geq d_e - 1 + n_e$. Then the first term of HLRC, $\sum_{v\in e}\frac{1}{n_e}$ is bounded above by $\frac{d_e}{d_e-1+n_e}$.
Also we have both $\frac{n_e + d_e/2 - 1}{\max_{v\in e} n_v}$ and $
\frac{n_e + d_e/2 - 1}{\min_{v\in e} n_v}$ bounded above by $ \frac{n_e + d_e/2 - 1}{d_e-1+n_e}$, by simply replacing $n_v$ by $d_e - 1 + n_e$.
One obtains
$$
\mathrm{HLRC}(e) \le
\frac{d_e}{d_e-1+n_e} -1+2\cdot\frac{n_e + d_e/2 - 1}{d_e-1+n_e}=1.
$$
For the lower bound, each of the three summands in the definition of $\mathrm{HLRC}(e)$ is strictly positive, so
$$
\mathrm{HLRC}(e)>0 -1 + 0 + 0=-1.
$$
\end{proof}

\subsection{Computational Complexity of HLRC.}
\begin{remark}
    Computational complexity of HLRC is $O(m\bar{d_v}n)$ for a hypergraph of $n$ nodes, $m$ edges, and average edge‐size $\bar {d_v}$.
\end{remark}

\begin{proof}
The computation involves three key steps. First, we precompute node neighborhoods $\mathcal{N}(v)$ by iterating through all $m$ hyperedges and their nodes, recording adjacencies. This requires $O(m\bar{d_v}^2)$ operations where $\bar{d_v}$ is the average hyperedge size, as each node pair within a hyperedge must be processed. Second, we compute edge neighborhoods $\mathcal{N}(e)$ via set intersections of $\mathcal{N}(v)$ for all $v \in e$, costing $O(m\bar{d_v}n)$ due to $O(n)$-time intersections across $\bar{d_v}$ nodes per hyperedge. Finally, calculating HLRC for all hyperedges requires $O(m\bar{d_v})$ operations for aggregating terms. The total complexity combines these steps:
$O(m\bar{d_v}^2) + O(m\bar{d_v}n) + O(m\bar{d_v}) = O(m\bar{d_v}n)$.
\end{proof}

\subsection{HLRC on special uniform hypergraphs.}
\begin{theorem}
    For any hyperedge $e$ in a $k$-uniform complete hypergraph, $\mathrm{HLRC}(e)=1$.
\end{theorem}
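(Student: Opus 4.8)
The plan is to substitute the specific structural quantities of a complete $k$-uniform hypergraph directly into the closed-form definition \eqref{eq:hlrc} and verify that everything collapses to $1$. The crucial observation is that in a complete $k$-uniform hypergraph on $n$ nodes, any two distinct nodes lie together in some $k$-subset and hence in a common hyperedge, so every node is adjacent to all others. This immediately yields $n_v = n-1$ for every $v \in \mathcal{V}$, and therefore $\max_{v\in e} n_v = \min_{v\in e} n_v = n-1$, which eliminates the distinction between the two hyperedge-level terms.

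First I would compute the hyperedge neighborhood size $n_e$. Since $\mathcal{N}(v) = \mathcal{V}\setminus\{v\}$ for each node, the intersection $\mathcal{N}(e) = \bigcap_{v\in e}\mathcal{N}(v)$ consists of exactly those nodes lying outside $e$, so $n_e = n - k$. Combined with $d_e = k$, the three summands reduce to $\sum_{v\in e} \frac{1}{n_v} = \frac{k}{n-1}$ and two identical copies of $\frac{(n-k)+k/2-1}{n-1}$.

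Second, I would place these over the common denominator $n-1$ and simplify. The combined numerator is
$$
k + 2\bigl[(n-k)+\tfrac{k}{2}-1\bigr] = 2n-2 = 2(n-1),
$$
so the fraction equals $2$, and subtracting the trailing constant $1$ gives $\mathrm{HLRC}(e) = 1$, as claimed.

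There is no genuine obstacle here: the result is a direct computation once the complete adjacency structure is recognized. The only point requiring care is the evaluation of $n_e$, where one must confirm that the hyperedge neighborhood is precisely the complement $\mathcal{V}\setminus e$ rather than all of $\mathcal{V}$. A useful sanity check is that the algebra also goes through in the degenerate case $n=k$ (a single hyperedge, so $n_e=0$), which confirms that the cancellation does not secretly rely on $n_e>0$.
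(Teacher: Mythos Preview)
Your proposal is correct and follows essentially the same approach as the paper: both identify $n_v=n-1$, compute $n_e=|\mathcal{V}\setminus e|=n-k$, substitute into the HLRC formula, and simplify the resulting expression over the common denominator $n-1$ to obtain $1$. Your additional sanity check at $n=k$ is a nice touch but not needed for the argument.
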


\begin{proof}
In the complete hypergraph on $n$ nodes, each node $v$ is adjacent to all other $n-1$ nodes s.t. $\mathcal{N}(v)=\mathcal{V}\backslash\{v\}$ and $n_v=n-1$. For any fixed hyperedge $e$ of size $k$, the set of common neighbors of its vertices is percisely $\mathcal{V}\backslash e$ which has size $n-k$. By definition, $$\mathrm{HLRC}(e) = \frac{k}{n-1} - 1 + \frac{n-k + k/2 - 1}{n-1} + \frac{n-k + k/2 - 1}{n-1} = 1. $$
\end{proof}
\begin{theorem}
     For any hyperedge $e$ in a $k$-uniform, $s$-intersecting hypercycle with $m\ge 2$ hyperedges,
\begin{align*}
    \mathrm{HLRC}(e)=\begin{cases}
    \frac{k/2-2s}{k-1}+\frac{k/2+2s-1}{2k-2s-1}, & k>2s, m = 2\\
    \frac{k/2-2s}{k-1}+\frac{k/2+2s-1}{2k-s-1}, & k>2s, m\geq 3\\
    1 , &k=2s,m = 3 \\
    \frac{s-1}{3s-1} , &k=2s,m \ge 4 \\
    \dots, &\text{(intermediate regimes)} \\
    1, &k=s+1, m < 2k \\
    0, & k=s+1, m \geq 2k
    \end{cases}.
\end{align*}
\end{theorem}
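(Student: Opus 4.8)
The plan is to compute $\mathrm{HLRC}(e)$ directly from its definition by determining, for a fixed hyperedge $e=e_j$, the exact values of $n_v$ for every $v\in e$, together with $\max_{v\in e}n_v$, $\min_{v\in e}n_v$, and $n_e$; substituting these into Equation \eqref{eq:hlrc} and simplifying then yields each closed form. The essential combinatorial input is the membership pattern of the vertices of $e$ among its neighboring hyperedges. For $m\ge 3$ the edge $e_j$ has two \emph{distinct} flanking edges $e_{j-1}$ and $e_{j+1}$, sharing $s$ vertices with each; for $m=2$ the single neighbor plays both roles, so $e$ shares $2s$ vertices with it. Whether the two shared sets $A=e_{j-1}\cap e_j$ and $B=e_j\cap e_{j+1}$ are disjoint is governed by $k$ versus $2s$: when $k\ge 2s$ they fit disjointly inside $e$, leaving $k-2s$ \emph{interior} vertices belonging to $e$ alone, whereas when $2s>k$ they must overlap in $|A\cap B|=2s-k$ vertices shared by three consecutive edges.

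First I would record the vertex-degree classes, each $n_v$ being the size of the union of the edges incident to $v$ minus one. A vertex in a single edge has $n_v=k-1$; one in two consecutive edges has $n_v=2k-s-1$ (or $2k-2s-1$ in the $m=2$ case, where both contacts fall on the lone neighbor); and, in the overlapping regime, a triply-shared vertex has $n_v=|e_{j-1}\cup e_j\cup e_{j+1}|-1=3k-2s-1$. Next I would compute $n_e$, the number of common neighbors of all of $e$ lying outside $e$. The key observation is that such a common neighbor must be adjacent to every interior vertex, yet an interior vertex's entire neighborhood is contained in $e$; hence \emph{the presence of any interior vertex forces} $n_e=0$. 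This immediately disposes of the regime $k>2s$, and---via the sliding-window description of a $(k{-}1)$-intersecting cycle on $m$ vertices---also of $k=s+1,\,m\ge 2k$, where each vertex sees exactly $2k-2$ neighbors and no external common neighbor survives. When interior vertices are absent ($k\le 2s$), $n_e$ is instead dictated by whether the flanking edges $e_{j\pm1}$ are themselves adjacent: for $k=2s,\,m=3$ they share $s$ vertices lying outside $e$, giving $n_e=s$, whereas for $m\ge 4$ they are disjoint and $n_e=0$; for $k=s+1,\,m<2k$ the pairwise projection becomes complete, giving $n_e=m-k$.

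With these quantities in hand, substitution and simplification are routine: grouping the summands by common denominator telescopes the expression (for instance the $k>2s$ case collapses to $\tfrac{k/2-2s}{k-1}+\tfrac{k/2+2s-1}{2k-s-1}$), and the boundary regimes return the expected extremal values $1$ (maximal cohesion) and $0$ (local flatness). The main obstacle is the \emph{intermediate regimes} $s+1<k<2s$. There the overlap $A\cap B$ is nonempty, so $e$ splits into three degree classes---$2(k-s)$ vertices of degree $2k-s-1$ and $2s-k$ vertices of degree $3k-2s-1$---and, because the triply-covered vertices force $e_{j-1}\cap e_{j+1}\neq\emptyset$, the clean ``non-consecutive edges are disjoint'' picture breaks down. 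Pinning down $n_e$ then requires tracking how far the wrap-around reaches as a function of $m$, and checking that $\max$ and $\min$ select the correct degree class ($3k-2s-1$ and $2k-s-1$ respectively, since $k>s$), which is the bookkeeping-heavy heart of the argument and the reason these sub-cases are only indicated by ``$\dots$'' in the statement.
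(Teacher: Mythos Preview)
Your proposal is correct and follows essentially the same case-by-case direct computation as the paper: partition the vertices of $e$ according to how many consecutive hyperedges they lie in, read off each $n_v$, determine $n_e$ (your observation that any interior vertex forces $n_e=0$ is exactly the mechanism the paper invokes as ``trivially $n(e)=0$''), and substitute into the definition. Your handling of the $m=2$ degeneracy, the $k=s+1$ sliding-window picture, and the intermediate regimes $s+1<k<2s$ is, if anything, more explicit than the paper's own proof, which simply asserts that those sub-cases ``follow exactly the same reasoning'' and omits them.
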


\begin{proof}
Under the setting of $k$-uniform, every hyperedge has exactly $k$ nodes s.t. $d_e=k$. We therefore focus on how many neighbors each nodes in a given hyperedge $e$ has, and the number of shared neighbors, under different overlaps $s$.

First, consider the case $k>2s$. Here $e_j$ shares exactly $s$ nodes with the preceding hyperedge and another $s$ nodes with the following hyperedge, leaving $k-2s$ nodes unique to $e_j$. So we can rewrite hyperedge as $e_j = \{A_s, B_{k-2s}, C_s\}$, where $A_s = \{v_1,...,v_s\}$, $B_{k-2s} = \{v_{s+1},...,v_{k-s}\}$, $C_s = \{v_{k-s+1},...,v_k\}$.  $A_s$ is the overlap set between $e_j$ and $e_{j-1}$, and $C_s$ is the overlap set set between $e_j$ and $e_{j+1}$. 
When $m\ge 3$, any node in $A_s$ or $B_s$ has $k-1$ neighbors within $e_j$ plus an additional $k-s$ neighbors in the overlapping hyperedge, for a total of $2k-s-1$, i.e. $n(v_i) = 2k-s-1,\forall v_i\in A_s\cup B_s$. In contrast, each nodes in $B_{k-2s}$ sees only the other $(k-1)$ vertices of $e_j$, so $n(v_i) = k-1,\forall v_i\in B_{k-2s}$. Trivially, this leads to $n(e)=0$. Substituting these counts into the curvature definition gives 
\begin{align*}
    \mathrm{HLRC}(e) &= \sum_{v_i\in A_s \cup C_s}\frac{1}{2k-s-1} + \sum_{v_i\in B_{k-2s}}\frac{1}{k-1} + \frac{k/2-1}{2k-s-1} + \frac{k/2-1}{k-1}-1\\
    &= \frac{k/2 + 2s-1}{sk-s-1} + \frac{k/2-2s}{k-1}. \quad\quad\quad (k>2s, m\ge 3)
\end{align*}
When $m=2$, $e_{j-1}$ and $e_{j+1}$ are the same hyperedge, thus any node in $A_s$ or $B_s$ has $k-1$ neighbors within $e_j$ plus an additional $k-2s$ neighbors in the overlapping hyperedge, rather than $k-s$ in the prior cases. The other quantities keep the same, thus 
\begin{align*}
    \mathrm{HLRC}(e) &= \sum_{v_i\in A_s \cup C_s}\frac{1}{2k-2s-1} + \sum_{v_i\in B_{k-2s}}\frac{1}{k-1} + \frac{k/2-1}{2k-2s-1} + \frac{k/2-1}{k-1}-1\\
    &= \frac{k/2 + 2s-1}{sk-2s-1} + \frac{k/2-2s}{k-1}. \quad\quad\quad (k>2s, m = 2)
\end{align*}

Next, when $k=2s$, the hyperedge splits evenly into two overlap regions of size $s$, i.e $e_j = \{A_s, B_s\}$, where $A_s = \{v_1,...,v_s\}$ and $B_s = \{v_{s+1},...,v_{2s}\}$. When $m\ge4$, every node then has $k-1=2s-1$ neighbors inside $e_j$ and $s$ neighbors in the adjacent hyperedge, totaling $3s-1$. It follows that
\begin{align*}
\mathrm{HLRC}(e) &= \sum_{v_i\in e_j} \frac{1}{3s-1} + \frac{k/2-1}{3s-1}+\frac{k/2-1}{3s-1}-1\\ 
&= \frac{s-1}{3s-1}.
\quad\quad\quad (k=2s, m \ge 4)
\end{align*}
When $m=3$, every node still has $k-1=2s-1$ neighbors inside $e_j$ and $s$ neighbors in the adjacent hyperedge, totaling $3s-1$. But the $s$ neighbors are their common neighbors under this setting, s.t. $n(e)=s$. It follows that
\begin{align*}
\mathrm{HLRC}(e) &= \sum_{v_i\in e_j} \frac{1}{3s-1} + \frac{k/2+s-1}{3s-1}+\frac{k/2+s-1}{3s-1}-1\\ 
&= 1.
\quad\quad\quad (k=2s, m =3 )
\end{align*}
The derivations for the intermediate regimes—which interpolate as the relative values of $k$, $s$, and $m$ vary—follow exactly the same reasoning as the cases shown above and are therefore omitted.

When $k=s+1$ and $m\ge2k$, for hyperedge $e_j$ contains $\{v_1,v_2,...,v_s,v_{s+1}\}$, we have $\{v_1,v_2,...,v_s\}$ exists in $e_{j-1}$ and $\{v_2,...,v_s,v_{s+1}\}$ exists in $e_{j+1}$. It is easy to prove that each $v_i$ belongs to exactly $s+1$ consecutive hyperedges: $v_1$ exists in $\{e_{j-s},...,e_j\}$, ${v_2}$ exists in $\{e_{j-2},...,e_{j+1}\}$ and in general $v_i$ appears in in $\{e_{j-s+i-1},...,e_{j+i-1}\}$. Within $e_j$ each node has $k-1=s$ neighbors, and it gains exactly one new neighbor from each of the other $s$ hyperedges it belongs to, for a total neighborhood size $n(v_i) = 2s$. Substituting $n(v_i)=2s$ into HLRC definition gives
\begin{align*}
    \mathrm{HLRC}(e) & = \sum_{v_i\in e_j}\frac{1}{2s} + \frac{k/2-1}{2s}+\frac{k/2-1}{2s}-1 \\
    &= 0. \quad\quad\quad (k=s+1, m \ge 2k)
\end{align*}

Finally, When $k=s+1$ and $m < 2k$ hyperedges, every node in $e_j$ becomes adjacent to all $n-1$ other vertices in the hypercycle, and $n(e_j) = n-k$. A direct check then shows $\mathrm{HLRC}(e)=1$, recovering the complete hypergraph case.
\end{proof}

\begin{theorem}
    For any non-terminal hyperedge $e$ in a $k$-uniform, $r$-regular, 1-intersecting hypertree, $\mathrm{HLRC}(e) = \frac{2}{r}-1$. For any terminal hyperedges $e$, $\mathrm{HLRC}(e) = \frac{(r+1)k-2r}{2r(k-1)}$.
\end{theorem}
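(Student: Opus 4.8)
The plan is to reduce the entire statement to computing the three structural quantities $n_v$, $n_e$, and $d_e$ that enter the HLRC formula \eqref{eq:hlrc}, and then substitute and simplify. Because the hypertree is $k$-uniform we have $d_e=k$ for every hyperedge, so the only genuine work is counting neighbors; the algebra afterwards is mechanical.

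First I would pin down the neighborhood size of a generic node. Fix an \emph{internal} node $v$, i.e.\ one of degree $r$. It lies in exactly $r$ hyperedges, and by the $1$-intersecting hypothesis together with the acyclicity of the hypertree, any two of these $r$ hyperedges meet only at $v$. Hence each contributes exactly $k-1$ fresh neighbors, giving $n_v=r(k-1)$. By contrast, a \emph{leaf} node—one belonging to a single terminal hyperedge—has all its neighbors inside that one hyperedge, so $n_v=k-1$. These two counts are the only node-level data required. In both the terminal and non-terminal cases I would also argue $n_e=0$: a common neighbor of every node of $e$ must lie outside $e$ (a node is not its own neighbor), but the acyclic, $1$-intersecting structure forbids any external node from being adjacent to two nodes of $e$ at once, so the intersection $\bigcap_{v\in e}\mathcal{N}(v)$ is empty.

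For a non-terminal hyperedge every one of its $k$ nodes is internal, so $n_v=r(k-1)$ throughout and $\max_{v\in e}n_v=\min_{v\in e}n_v=r(k-1)$. Substituting $d_e=k$, $n_e=0$, and $n_v\equiv r(k-1)$ into \eqref{eq:hlrc} places all three fractions over the common denominator $r(k-1)$; the numerator simplifies to $2(k-1)$, so the expression collapses to $\frac{2(k-1)}{r(k-1)}-1=\frac{2}{r}-1$. For a terminal hyperedge $e$, exactly one node $v_1$ is shared with its parent—hence internal, with $n_{v_1}=r(k-1)$—while the remaining $k-1$ nodes are leaves with $n_v=k-1$; thus $\max_{v\in e}n_v=r(k-1)$ and $\min_{v\in e}n_v=k-1$. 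Here the sum splits as $\sum_{v\in e}\frac{1}{n_v}=\frac{1}{r(k-1)}+1$, and combining with the two hyperedge-level terms $\frac{k/2-1}{r(k-1)}$ and $\frac{k/2-1}{k-1}$ and the $-1$, I would collect everything over $2r(k-1)$ to obtain $\frac{k+r(k-2)}{2r(k-1)}=\frac{(r+1)k-2r}{2r(k-1)}$.

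The main obstacle is not the algebra but rigorously justifying the neighbor counts—especially $n_e=0$—and reconciling global $r$-regularity with the existence of terminal hyperedges. The cleanest reading is that $r$-regularity governs interior branching, so the join node of a leaf hyperedge still has degree $r$ while its $k-1$ pendant nodes have degree $1$. I would make this precise by invoking the underlying host-tree description of a hypertree (each hyperedge being the node set of a connected subtree), from which both the ``intersection only at the shared node'' property and the absence of external common neighbors follow directly.
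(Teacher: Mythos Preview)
Your proposal is correct and follows essentially the same route as the paper: compute $n_v=r(k-1)$ for internal nodes and $n_v=k-1$ for leaf nodes, observe $n_e=0$ from the acyclic, $1$-intersecting structure, and substitute into the HLRC formula. You are in fact slightly more careful than the paper in justifying $n_e=0$ and in flagging the tension between global $r$-regularity and degree-$1$ pendant nodes, but the underlying argument is the same.
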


\begin{proof}
Under the setting of $k$-uniform $r$-reguluar, each node of $e$ belongs to $r$ hyperedges each of size $k$. By the acyclic property of tree,those neighbor-sets never overlap. Hence every node in $e$ has exactly $r(k-1)$ neighbors with no shared common neighbors $(n_e=0)$. Plugging these values into HLRC definition gives
$$
\mathrm{HLRC}(e) = \sum_{v_i\in e_j}\frac{1}{r(k-1)} + \frac{k/2-1}{r(k-1)} + \frac{k/2-1}{r(k-1)} - 1 = \frac{2}{r}-1,
$$
which is negative for any $r\geq 2$.

By contrast, a terminal (leaf) hyperedge intersects the rest of the tree in exactly one node. That single overlap node again has $r(k-1)$ neighbors, while each of the other $k-1$ vertices sees only the $k-1$ neighbors within the terminal hyperedge itself. Summing their contributions and subtracting one yields
$$
\mathrm{HLRC}(e) = \frac{1}{r(k-1)} + \frac{k-1}{k-1} + \frac{k/2-1}{r(k-1)} + \frac{k/2-1}{k-1}-1 = \frac{(r+1)k-2r}{2r(k-1)}.
$$
\end{proof}

\begin{theorem}
    For any hyperedge $e$ in a $k$-uniform, 1-intersecting, 2-regular hypergrid, $\mathrm{HLRC}(e)= 0$.
\end{theorem}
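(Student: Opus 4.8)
The plan is to follow the same template as the preceding special-hypergraph results: first pin down the neighborhood size $n_v$ of each node in the hyperedge $e$, then determine the common-neighbor count $n_e$, and finally substitute both into Equ. \eqref{eq:hlrc}. Throughout I would work directly from the three defining properties — $k$-uniform (so $d_e = k$), $2$-regular (each node lies in exactly two hyperedges), and $1$-intersecting (two intersecting hyperedges share exactly one node) — since these, rather than the ambient lattice geometry, are what the curvature formula actually sees.

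First I would compute $n_v$. Fix $v \in e$. By $2$-regularity, $v$ belongs to exactly two hyperedges, namely $e$ itself and one other hyperedge $e_v$. The $1$-intersecting property forces $e \cap e_v = \{v\}$, so the $k-1$ neighbors of $v$ coming from $e$ and the $k-1$ neighbors coming from $e_v$ are disjoint, their only possible common element being $v$, which is not its own neighbor. Hence $n_v = (k-1)+(k-1) = 2k-2$ for every $v \in e$, and in particular $\max_{v\in e} n_v = \min_{v\in e} n_v = 2k-2$.

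Next I would show $n_e = 0$ (in the regime $k \ge 3$ of genuine hypergrids). No node of $e$ lies in $\mathcal{N}(e)$, since a node is never its own neighbor. For an external node $u \notin e$ to be a common neighbor it must share a hyperedge with each $v_i \in e$; as $u \notin e$, that shared hyperedge can only be $e_{v_i}$, so $u$ would have to lie in all of $e_{v_1}, \dots, e_{v_k}$. These are pairwise distinct: if $e_{v_i} = e_{v_j}$ for $i \ne j$, that hyperedge would meet $e$ in the two nodes $v_i, v_j$, violating $1$-intersecting. Thus $u$ would belong to $k \ge 3$ distinct hyperedges, contradicting $2$-regularity, so no such $u$ exists and $n_e = 0$.

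Finally I would substitute $d_e = k$, $n_v = 2k-2$, and $n_e = 0$ into Equ. \eqref{eq:hlrc}, obtaining
\[
\mathrm{HLRC}(e) = \frac{k}{2k-2} + 2\cdot\frac{k/2-1}{2k-2} - 1 = \frac{k+(k-2)}{2k-2} - 1 = 0.
\]
I expect the main obstacle to be the $n_e = 0$ step: it is the only place where the global combinatorics — distinctness of the secondary hyperedges together with the degree bound from $2$-regularity — must be combined carefully, and it is where the $k \ge 3$ assumption is genuinely needed, since for $k = 2$ the hypergrid degenerates to a graph and the common-neighbor count must be argued separately. The node-degree count and the final substitution are routine once $n_v$ and $n_e$ are in hand, and the resulting value matches the $k = s+1$, $m \ge 2k$ hypercycle case, where the same pair $n_v = 2k-2$, $n_e = 0$ likewise yields zero curvature.
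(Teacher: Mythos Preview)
Your proposal is correct and follows essentially the same route as the paper: compute $n_v = 2(k-1)$ from $2$-regularity and $1$-intersection, argue $n_e = 0$, and substitute into the HLRC formula. Your justification of $n_e = 0$ is in fact more careful than the paper's, which simply asserts ``there is no common neighbors given the structure of hypergrid'' without the distinctness-plus-degree-bound argument you supply.
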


\begin{proof}
In a $k$-uniform, 1-intersecting, 2-regular hypergrid, every hyperedge has exactly $k$ vertices, each node belongs to two hyperedges, and any two hyperedges meet in exactly one node. Consequently, each node of a given hyperedge $e$ has $2(k-1)$ neighbors (the other $k-1$ vertices in $e$ plus $k-1$ node from the adjacent hyperedge). And there is no common neighbors given the structure of hypergrid, thus $n_e=0$. Plugging these values into the HLRC formula,
$$
\mathrm{HLRC}(e) = \sum_{v_i\in e_j}\frac{1}{2(k-1)} + \frac{k/2-1}{2(k-1)} + \frac{k/2-1}{2(k-1)} - 1 = 0
$$
\end{proof}

\section{Additional details in synthetic and real data analysis.}

In this note, we will provide more special hypergraph visualization under varying $k, r, s$, and more hypergraph stochastic block model (HSBM\cite{ghoshdastidar2014consistency}) generated hypergraphs. In addition, We also include a comprehensive datasets description, expanded results, and a concise review of the clustering objective metrics.

\subsection{Special uniform hypergraphs.}
\textbf{Supplementary Fig. \ref{SIfig:1}} illustrates HLRC on various uniform hypergraphs as their parameters vary. All complete $k$–uniform hypergraphs attain the maximum curvature of 1 (\textbf{Supplementary Fig. \ref{SIfig:1}a-c}). In the hypercycle panels (\textbf{Supplementary Fig. \ref{SIfig:1}d-f}), we show three regimes of $(k,s,m)$: when $k=4,s=2$ (so $k=2s$), HLRC is positive at 0.2; when $k=4,s=3,m=8$ (so $k=s+1$ and $m\ge2k$), HLRC falls to 0; and when $m$ decreases to 7 (so $k=s+1,m\le2k-1$), HLRC saturates at 1. For hypertrees, extending the tree depth from 3 to 4 leaves both terminal and non-terminal hyperedge curvatures unchanged. However, increasing the regularity $r$ from 2 to 3 (with fixed $k$ and overlap) drives HLRC of non-terminal hyperedges from 0 down to –0.33 and that of terminal hyperedges from 0.625 to 0.5, reflecting the increased neighborhood size without changing edge size or shared-neighbor structure. Finally, when hyperedge size grows from 3 to 4 in a 3-regular, 1-intersecting tree, non-terminal hyperedge curvature remains determined solely by $r$, while terminal hyperedge curvature rises slightly from 0.50 to 0.56, owing to the larger clique-like substructure at the leaves.

\begin{figure}[ht]
    \centering
    \includegraphics[width=0.8\linewidth]{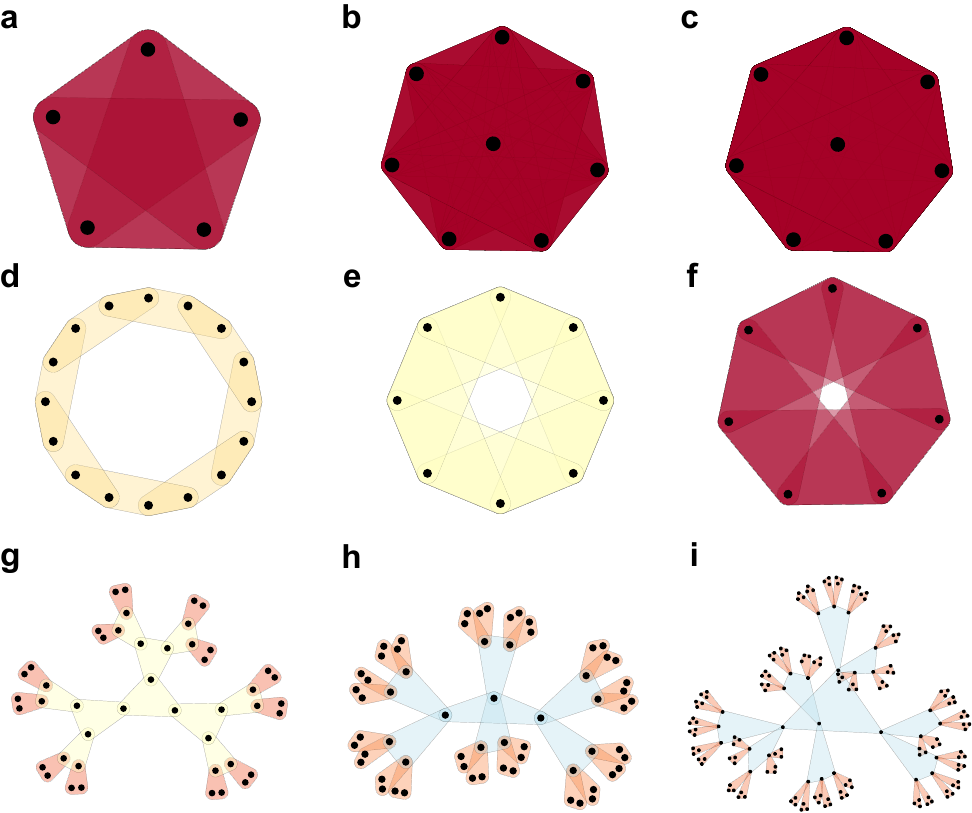}
    \caption{\textbf{HLRC on special uniform hypergraphs under varying parameters.}
    \textbf{a–c}. Complete $k$-uniform hypergraphs on $n$ nodes with $(k,n)=(4,5),(3,8),(5,8)$, each attaining the maximal HLRC value of 1.
    \textbf{d–f}. $k$–uniform, $s$-intersecting hypercycles with $m$ hyperedges in three regimes, $(k,s,m)=(4,2,8),(4,3,8),(4,3,7)$. \textbf{d}. This is the case when $k=2s, m\ge2k$, each hyperedge has HLRC being 0.2. \textbf{e}. This is the case when $k=s+1, m\ge2k$, each hyperedge has HLRC being 0. \textbf{f}. This is the case when $k=s+1, m<2k$, each hyperedge has HLRC being 1.
    \textbf{g-i}. $k$–uniform, $r$–regular, 1–intersecting hypertrees with $(k,r,\text{tree depth})= (3,2,4),(3,3,3),(4,3,3)$.
    \textbf{g}. Compared with \textbf{Fig. 3}b in the main text, increasing depth from 3 to 4 leaves both terminal and non-terminal HLRC unchanged.
    \textbf{h}. Raising regularity $r$ from 2 to 3 (with fixed $k$ and overlap) drives non-terminal hyperedge HLRC from 0 to –0.33 and terminal hyperedge HLRC from 0.625 to 0.50.
    \textbf{i}. Enlarging hyperedge size from 3 to 4, non-terminal hyperedge HLRC hold constant while terminal hyperedge HLRC rises from 0.50 to 0.56.}
    \label{SIfig:1}
\end{figure}

\subsection{Synthetic HSBM generated hypergraphs.}
To test HLRC’s ability to recover community structure, we generated a series of synthetic $k$-uniform hypergraphs via a stochastic block model and visualized both their topology and curvature distributions in \textbf{Supplementary Fig.\ref{SIfig:2}}. In the two-community settings (\textbf{Supplementary Fig.\ref{SIfig:2}}a-b), we fix $k=4$ and compare equal (15+15 nodes) versus imbalanced (20+40 nodes) block sizes, sampling intra-community edges at 0.1 and inter-community edges at 0.001. In the three-community cases (\textbf{Supplementary Fig.\ref{SIfig:2}}c-d), we sample at 0.01 and 0.0001 to avoid overly dense graphs, with community sizes of 15+15+15 and 40+30+20 nodes respectively. Each hypergraph layout is colored by ground-truth block and edge shading indicates HLRC value. In every scenario, HLRC cleanly highlights intra-community hyperedges with positive curvature and inter-community bridges with negative curvature. The lower row (\textbf{Supplementary Fig.\ref{SIfig:2}}e-h) quantifies these observations: boxplots of HLRC for intra- versus inter-community edges show a clear bimodal separation and highly significant differences (Wilcoxon rank-sum, $p$-value$<$0.001). These results confirm that HLRC robustly discriminates community structure across balanced, unbalanced, and multi-block hypergraphs.

\begin{figure}[ht]
    \centering
    \includegraphics[width=\linewidth]{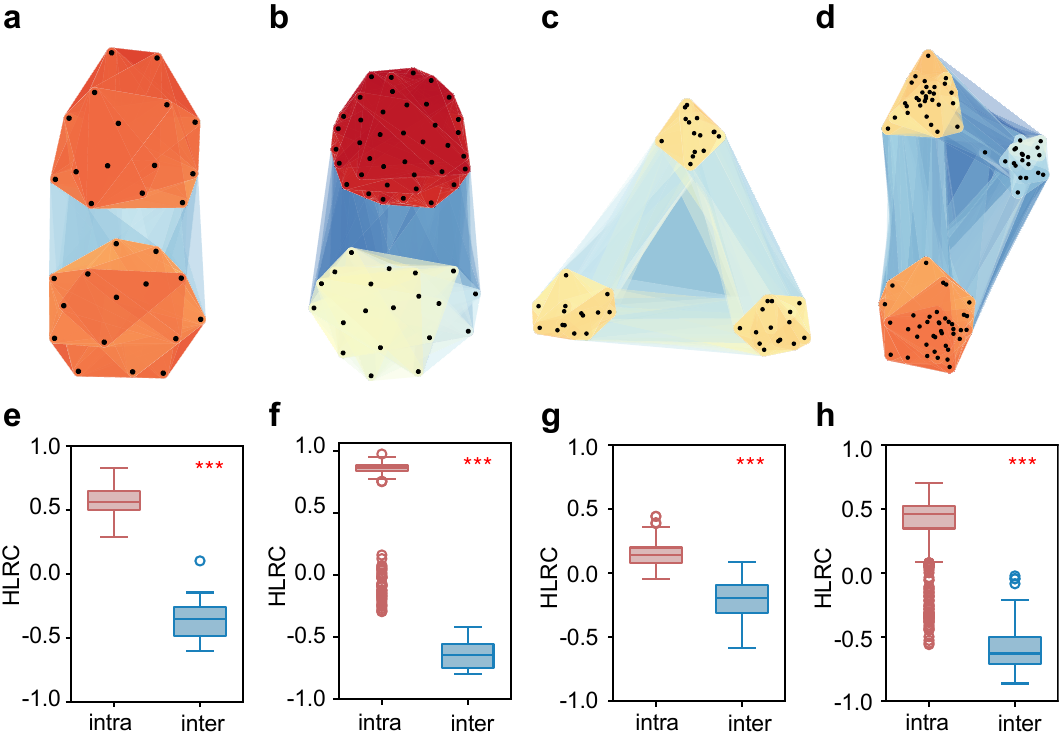}
    \caption{\textbf{4-uniform HSBM-generated hypergraphs with balanced and unbalanced communities.} 
    \textbf{a-b}. Two-community cases where intra-community hyperedges are drawn with probability 0.1 and inter-community hyperedges with probability 0.001. 
    \textbf{a}. Two equal-size communities of 15 nodes each. 
    \textbf{b}. Unequal community sizes of 20 and 40. 
    \textbf{c-d}. To avoid overly dense hypergraphs in the three-community cases, sampling probabilities are reduced to 0.01 (intra) and 0.0001 (inter). 
    \textbf{c}. Three equal‐size communities of 15 nodes. 
    \textbf{d}. Three unequal-size communities of 40, 30, and 20 nodes. 
    \textbf{e–h}. HLRC distributions for intra- versus inter-community hyperedges in \textbf{a}-\textbf{d} with significance assessed via Wilcoxon rank-sum tests.}
    \label{SIfig:2}
\end{figure}

\subsection{Runtime comparison.}

In this runtime comparison experiment, all synthetic hypergraphs were generated according to the Chung–Lu hypergraph model\cite{kaminski2019clustering} by first prescribing each node’s target degree $d(v_i)$ and each hyperedge’s size $d(e_j)$, then sampling incidences so that the total node‐degree volume equals the total hyperedge‐size volume:  
$\mathrm{vol}(V)=\sum_{i=1}^n d(v_i)=\sum_{j=1}^m d(e_j).$
\textbf{Supplementary Fig. \ref{SIfig:3}} plots the wall‐clock time required to compute curvature on these Chung–Lu hypergraphs as the number of hyperedges $m$, the number of nodes $n$, and the average hyperedge size $\bar{d_v}$ varied one at a time. In all cases, HLRC's and HFRC's runtime keep around $0s$, while HORC's exhibits a steep growth.

\begin{figure}[ht]
    \centering
    \includegraphics[width=\linewidth]{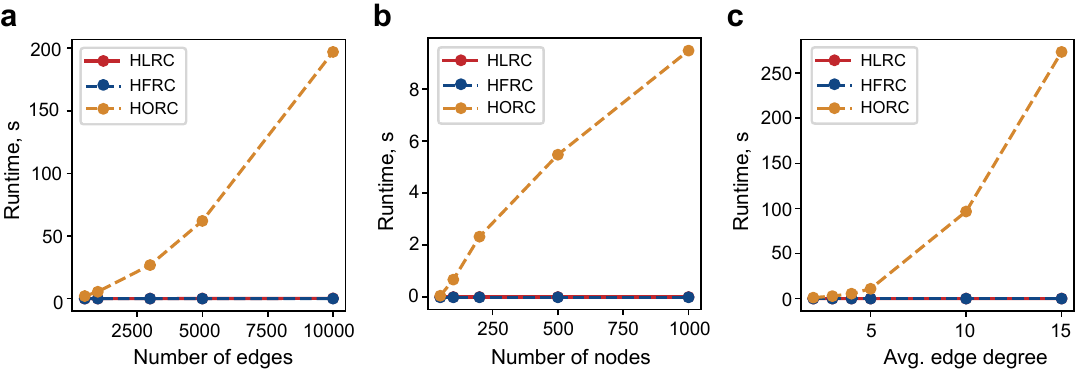}
    \caption{ \textbf{Runtime comparison under varying hypergraph parameters.} 
    \textbf{a}. The number of hyperedges $m$ varies over {500,1000,3000,5000,10000}. \textbf{b}. The number of nodes $n$ varies over {50,100,200,500,1000}. \textbf{c}. The average edge degree $\bar{d_v}$ varies over {2,3,4,5,10,15}. In each plot, only the indicated parameter is changed while the other two remain fixed at their baseline values ($m$=1000, $n$=500, $\bar{d_v}$=4).}
    \label{SIfig:3}
\end{figure}

\subsection{Detailed dataset descriptions}

The contact high school dataset\cite{Mastrandrea-2015-contact} was gathered over several days in December 2013 at a French high school, where 327 students and staff each wore an RFID badge that recorded proximity—any two badges within about 1–1.5 m exchanged signals every 20 seconds. From each 20-second snapshot of pairwise contacts, a proximity graph is constructed (nodes = people; edges = detected interactions), and its maximal cliques—groups of individuals all mutually in contact—are extracted as timestamped hyperedges. In total, the dynamic hypergraph comprises 172035 timestamped hyperedges (7937 unique) ranging from dyads up to gatherings of five. For a static representation, the 7818 distinct cliques observed across all intervals were retained. In addition, each participants belonged to nine second-year classes: three in Biology (2BIO1, 2BIO2, 2BIO3), three in Mathematics–Physics(MP, MP*1, MP*2), two in Physics–Chemistry (PC and PC*) and one in Engineering Sciences (PSI*).

The Multi‐Attribute dataset (MADStat) on Statisticians aggregates bibliographic records from 36 leading statistics journals spanning 1975–2015, as compiled and cleaned by \cite{ji2022co}. It comprises 83331 papers and 47311 distinct authors. To build its hypergraph, each author is a node, and every set of co‐authors on a single paper forms one hyperedge. Hyperedges carry metadata for publication year and journal, enabling temporal and venue‐specific analyses. The dataset also includes author names and paper titles, enabling more detailed analyses.

The MAG-10\cite{amburg2020clustering, Sinha-2015-MAG} hypergraph is drawn from a cleaned subset of the Microsoft Academic Graph focused on ten flagship computer‐science conferences (for example, WWW, KDD, ICML, and so on).  In this construction, each node represents an individual author, and each hyperedge captures the full set of co‐authors on a single paper presented at one of those ten venues. To ensure consistency, any paper with more than 25 authors was excluded, and if the same group of authors appeared at multiple conferences their most frequent venue determined the hyperedge’s categorical label (ties were discarded).  The resulting static hypergraph comprises 80198 author-nodes and 51888 publication-hyperedges, each labeled by one of the ten conference categories; hyperedge sizes range from small collaborations (pairs or triples) up to the 25‐author cap. In MAG-10, author names and paper titles are not provided.

The Stex collection collected by \cite{coupette2023ollivier} comprises a suite of hypergraphs, each drawn from a different StackExchange community and built directly from its tagging system. In every Stex hypergraph, nodes are the distinct tags used on that site, and each hyperedge corresponds to one question, connecting the set of tags applied to it.  Across the 36 communities we examined (Table 8–11 in \cite{coupette2023ollivier}), the number of tags per site ranges from a few dozen (e.g. “tex” with 2035 tags) to several thousand (e.g. “superuser” with 5676 tags), while the number of questions spans from a few thousand (e.g. “japanese” at 26365) up to nearly half a million (e.g. “superuser” at 480854). Typical average hyperedge size (tags per question) lies between 2.0 and 3.0. For the task of hypergraph clustering, the three groups of hypergraphs were selected as the same as\cite{coupette2023ollivier}.

In the Mus collection\cite{coupette2023ollivier}, each hypergraph encodes a single musical piece drawn from the open-source music21 corpus. Using music21’s symbolic notation, every distinct pitch (sound frequency) was represented in a composition as a node. Hyperedges capture chords—i.e. all pitches sounding simultaneously at a given offset and for a specified duration. They restrict the collection to polyphonic works by a set of classical composers—Bach, Beethoven, Chopin, Haydn, Handel, Monteverdi, Mozart, Palestrina, Schumann, Schubert, Verdi, Joplin, Trecento, and Weber—and exclude primarily monophonic pieces.  As shown in Table 4 of \cite{coupette2023ollivier}, each selected hypergraph is summarized by its number of nodes $n$, number of edges $m$, and the distribution of hyperedge sizes (from 0 up to 12). For the clustering experiments, composers were grouped strictly by historical era. Giovanni Palestrina’s 1318 works—representing 67.4\% of all hyperedges—were omitted to prevent a single author from dominating the results. The remaining composers were then divided into three balanced, era‐based clusters: "Renaissance" (Monteverdi and Trecento), "Baroque" (Bach and Handel), and "Later" (Haydn, Mozart, Beethoven, Schumann, Schubert, Chopin, Verdi, Weber, and Joplin).

\subsection{Expanded results.}

For the MADStat dataset, \textbf{Supplementary Fig. \ref{SIfig:4}}a–c visualizes the one‐hop subgraphs surrounding the three hyperedges with the most negative HLRC values. Each of these hyperedges links two exceptionally well‐connected statisticians whose respective collaboration networks form large, disjoint clusters with no overlapping co‐authors—precisely the “bottleneck” structures that negative curvature highlights. Tracing the author names confirms they are among the most influential researchers in statistics, further validating HLRC’s ability to detect bridge‐like collaborations.

We performed a similar analysis on the MAG-10 co‐authorship hypergraph, which collects co‐author sets from ten major computer-science conferences. \textbf{Supplementary Fig.} \ref{SIfig:4}d shows the distribution of HLRC for each conference. \textit{The ACM Symposium on Theory of Computing} (STOC)—the flagship venue for foundational theoretical computer‐science research—exhibits the lowest average curvature, while \textit{the International World Wide Web Conference} (WWW)—a forum centered on web architecture, search, information retrieval, and related applied topics—has the highest average curvature. This mirrors the MADStat findings: more theory-oriented venues tend toward negative curvature (indicative of bridge‐like structures), whereas more application-driven venues show positive curvature (reflecting tighter, community-like groups). Finally, \textbf{Supplementary Fig. \ref{SIfig:4}}e depicts the one‐hop neighborhood around the single most negative hyperedge in MAG-10: it connects two authors who each have large, non‐overlapping collaborator sets (127 and 90 neighbors, respectively), again illustrating HLRC’s sensitivity to sparse‐overlap, high‐degree bridges. Because MAG-10 does not include author names, we cannot identify the individuals involved.

\begin{figure}[ht]
    \centering
    \includegraphics[width=\linewidth]{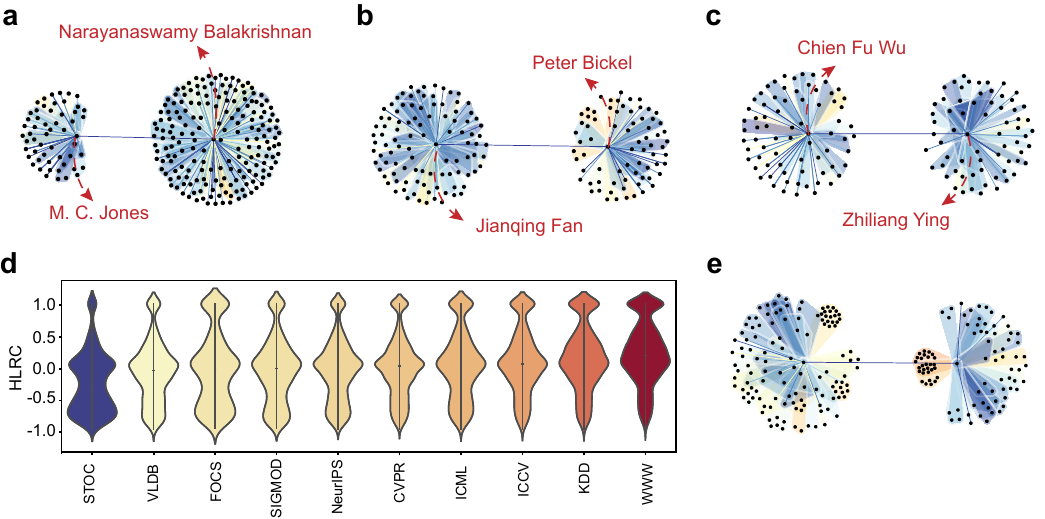}
    \caption{\textbf{Hyperedges with extreme low HLRC values and HLRC distribution across conferences.} \textbf{a-c}. One-hop subgraphs around the three hyperedges with the most negative HLRC in MADStat co-authorship network. 
    \textbf{a}. The lowest HLRC hyperedge (–0.976) links N. Balakrishnan and M.C. Jones with 186 and 55 collaborators respectively and no shared neighbors;
    \textbf{b}. The lowest HLRC hyperedge (–0.976) links Jianqing Fan and Peter Bickel with 101 and 73 collaborators respectively and no shared neighbors.
    \textbf{c}. The lowest HLRC hyperedge (–0.971) links Chien Fu Wu and Zhiliang Ying with 73 and 67 collaborators respectively, again with no common collaborators.
    \textbf{d}. HLRC distributions for MAG-10 hyperedges grouped by conference type, illustrating that theoretical venues exhibit more negative HLRC values and applied venues more positive.
    \textbf{e}. One-hop subgraph around the most negative hyperedge in MAG-10 ($\mathrm{HLRC}(e)$ = –0.981), linking authors with 127 and 90 neighbors and no shared collaborators; because MAG‐10 does not include author names, the individuals remain unidentified.}
    \label{SIfig:4}
\end{figure}

\subsection{AMI and ARI}
When comparing a proposed clustering to a known “ground-truth” partition, it is essential to use measures that both correct for chance agreement and are insensitive to the absolute number or sizes of clusters. Two widely used external validation metrics are the Adjusted Rand Index (ARI\cite{hubert1985comparing}) and the Adjusted Mutual Information (AMI\cite{vinh2009information}). Both begin with a raw similarity score—pairwise agreement for ARI, mutual information for AMI—and then subtract the expected value under random labeling before normalizing, so that perfect recovery scores 1 and chance agreement scores 0.

Let the ground-truth partition  be $X=\{X_1,...,X_K\}$ and the clustering under evaluation be $Y=\{Y_1,...,Y_K\}$, each dividing the same $n$ points into $K$ groups. Define,
$$
n_{ij} = |X_i \cap Y_j|,\quad  a_i = \sum_{j=1}^K n_{ij},\quad  b_j = \sum_{i=1}^K n_{ij},\quad  \binom{n}{2} = \frac{n(n-1)}{2}.
$$
The unadjusted Rand index counts the fraction of point-pairs on which the two partitions agree-either both assigning them to the same group or both keep then apart. The ARI refines this subtracting the expected number of agreeing pairs under a random model with fixed $\{a_i\}$ and $\{b_j\}$ and then dividing by the maximal possible excess above change:
$$
\mathrm{ARI} = \frac{\sum_{i,j}\binom{n_{ij}}{2} - \frac{\sum_i \binom{a_i}{2}\sum_j \binom{b_j}{2}}{\binom{n}{2}}}{\frac{1}{2}\left(\sum_i \binom{a_i}{2} + \sum_j \binom{b_j}{2}\right) - \frac{\sum_i \binom{a_i}{2}\sum_j \binom{b_j}{2}}{\binom{n}{2}}}.
$$
By construction, $\mathrm{ARI}=1$ when the partitions are identical, $\mathrm{ARI}=0$ if their agreement is  no better than random, and $\mathrm{ARI}<0$ when they agrees less than expected by chance.

In AMI, the two labelings were treated as discrete random variables $X$ and $Y$ over $\{1,...,K\}$. Their mutual information $$I(X;Y) = \sum_{i,j}\frac{n_ij}{n}\log\left(\frac{n_{ij}/n}{(a_i/n)(b_j/n)}\right)$$ measures how much knowing one labeling reduces uncertainty about the other. However, even independent partitions share some mutual information by chance. Denoting by $E[I(X;Y)]$ its expectation under random labelings with the same cluster-size profiles, the AMI is defined as $$\mathrm{AMI} = \frac{I(X;Y)-E[I(X;Y)]}{\frac{1}{2}(H(X)+H(Y))-E[I(X;Y)]},$$ where $$H(X) = -\sum_{i=1}^K\frac{a_i}{n}\log\left(\frac{a_i}{n}\right),\quad H(Y) = \sum_{i=1}^K\frac{b_i}{n}\log\left(\frac{b_i}{n}\right).$$This normalization ensures $\mathrm{AMI}=1$ for perfect correspondence and $\mathrm{AMI}=0$ for chance-level overlap; under the usual null model AMI never becomes negative.
\endgroup

\end{document}